\providecommand{\keywords}[1]{\textbf{\textit{Index terms---}} #1}
\newtheorem{theorem}{Theorem}
\newtheorem{lemma}{Lemma}
\newcommand{\ea}{\text{\sc EA}\xspace} % ea
\setlist[itemize]{leftmargin=*}
\setlist[enumerate]{leftmargin=*}
\DeclarePairedDelimiter\ceil{\lceil}{\rceil}
\DeclareMathOperator{\unif}{Unif} % uniform distribution
\newcommand{\mas}{\text{\sc MAs}\xspace} % MAs
\newcommand{\ls}{\text{\sc LocalSearch}\xspace} % Local search
\newcommand{\om}{\text{\sc OneMax}\xspace} % onemax
\newcommand{\hurdle}{\text{\sc Hurdle}\xspace} % Hurdle
\newcommand{\mutate}{\text{\sc Mutate}\xspace}% mutation
\newcommand{\flip}{\text{\sc Flip}\xspace}% flipping
\newcommand{\curbestfit}{\text{\sc CurBestFit}\xspace} %uniform distribution
\newcommand{\curbestind}{\text{\sc CurBestInds}\xspace}
\newcommand{\permute}{\text{\sc Per}\xspace} %permutation
\newcommand{\betterfound}{\text{\sc BetterFound}\xspace}% no better found
\newcommand{\rls}{\text{\sc RLS}\xspace} % Hurdle
\newcommand{\bils}{\text{\sc BILS}\xspace} % best improvement
\newcommand{\fils}{\text{\sc FILS}\xspace} % first improvement
\newcommand{\rem}{\text{rem}\xspace} % remainder
\newcommand{\ie}{\text{i.e.}\xspace} % remainder
\newcommand{\Prob}[1]{\text{Prob}\left(#1\right)}
\renewcommand{\textbf}[1]{#1}
\title{Memetic Algorithms Beat Evolutionary Algorithms
	on the Class of Hurdle Problems    
    \thanks{Preliminary version of this work will appear 
in the Proceedings of the 2018 Genetic and 
Evolutionary Computation Conference (GECCO 2018)} }
\author{
  Phan Trung Hai Nguyen\\
  School of Computer Science\\
  University of Birmingham\\ 
  \texttt{p.nguyen@cs.bham.ac.uk}
  \and
  Dirk Sudholt\\
  Department of Computer Science\\
  University of Sheffield\\
  \texttt{d.sudholt@sheffield.ac.uk}
}
\begin{document}

\maketitle

\begin{abstract}
   Memetic algorithms are popular hybrid search heuristics that integrate local search into the search process of an evolutionary algorithm in order to combine the advantages of rapid exploitation and global optimisation. However, these algorithms are not well understood and the field is lacking a solid theoretical foundation that explains when and why memetic algorithms are effective.

We provide a rigorous runtime analysis of a simple memetic algorithm, the (1+1)~MA, on the \hurdle problem class, a landscape class of tuneable difficulty that shows a ``big valley structure'', a characteristic feature of many hard problems from combinatorial optimisation. The only parameter of this class is
the hurdle width $w$, which describes the
%Hamming distance between local optima and the
length of fitness valleys that have to be overcome. We show that the (1+1)~EA requires $\Theta(n^w)$ expected function evaluations to find the optimum, whereas the (1+1)~MA with best-improvement and first-improvement local search can find the optimum in $\Theta(n^2 + n^3/w^2)$ and $\Theta(n^3/w^2)$ function evaluations, respectively. Surprisingly, while increasing the hurdle width makes the problem harder for evolutionary algorithms, the problem becomes easier for memetic algorithms.
We discuss how these findings can explain and illustrate the success of memetic algorithms for problems with big valley structures.
\end{abstract}

\keywords{Evolutionary algorithms, hybridisation, iterated local search,
    local search, memetic algorithms, running time analysis, theory}
 
%\linenumbers

\section{Introduction}

% STRUCTURE
% motivation
% related work
% outline of the paper

%%% Dirk: I put subsection headers for the time being, while this section is under construction, but we can remove them any time.
\subsection{Motivation}

Memetic Algorithms (\mas), also known as evolutionary/genetic local search or global-local search hybrids, are hybrid stochastic search methods that
incorporate one or more intensifying local search algorithms into
an evolutionary framework. The motivation behind this hybridisation
is to create a new algorithm that combines the exploration capabilities of an evolutionary algorithm with the efficiency and exploitation capabilities of local search. There are many examples where this strategy has proven effective; see e.\,g.~\cite{Neri2012,Neri2012a}.

In~\cite{Sudholt2011c}, three advantages for memetic algorithms are pointed out:
\begin{enumerate}
	\item Local search can quickly find solutions of high quality due to its rapid exploitation.
	\item Selection is only performed after local search has had a chance to improve on new offspring; this is beneficial for low-fitness offspring located in the basin of attraction of a high-fitness local optimum, as in a conventional evolutionary algorithm such low-fitness offspring would be removed by selection. This effect is particularly visible for constrained problems, where penalties are used for violated constraints, and local search can act as a repair mechanism~\cite{Sudholt2011c}.
	\item Local search can include problem-specific knowledge; this is often possible since local search strategies are typically easy to design, even when it is hard to design a global problem-specific strategy~\cite{Sudholt2011c}.
\end{enumerate}
%    . Moreover, there may be low-fitness offspring located in the basin of attraction of a high-fitness local optimum.
%    In a standard evolutionary algorithm, such solutions are likely to get lost immediately in the selection
%process. In a memetic approach local search may improve upon such solutions
%and reach a local optimum with high fitness. This effect is particularly
%visible in constraint optimization problems where often infeasible solutions
%are penalized with respect to their fitness and the penalty decreases towards feasible regions. If mutation and/or crossover create an infeasible
%offspring, local search can work as repair mechanism and find a feasible
%solution. Finally, memetic algorithms provide an option to include problem-specific knowledge into the search as local search may be adopted to the problem at hand. Such an approach is often possible since local search strategies are typically easy to design, even in cases where no global problem-specific strategy is known~\cite{Sudholt2011c}.

A challenge when dealing with memetic algorithms and
hybrid algorithms, in general, is that the search dynamics can be very hard to understand, in particular due to the interplay of different operators. It is not well understood when and why memetic algorithms perform well, when they do not, and how to design memetic algorithms most effectively for a problem in hand. Most work in this area is empirical and the theory of memetic algorithms is still in its infancy.

There are many different variants of memetic algorithms, from algorithms that only rarely apply local search, with a fixed \emph{local search frequency} to \emph{iterated local search} algorithms where local search is applied in every generation~\cite{Lourencco2002}.
In the latter scenario, local search turns all search points into local optima, and evolution acts on the sub-space of local optima. The hope is that mutation can lead a memetic algorithm to leave its current local optimum, and to reach the basin of attraction of a better one.

We demonstrate that this strategy works very effectively on a class of problems introduced by Pr{\"u}gel-Bennett~\cite{PRUGELBENNETT2004135} as example problems where genetic algorithms using crossover perform better than hill climbers. The \hurdle problem class (formally defined in Section~\ref{hurdle-problems}) is a function of unitation\footnote{A function of unitation is a function that only depends on the number of ones.} with an underlying gradient leading towards the global optimum and a number of ``hurdles'' that have to be overcome. These hurdles consist of a local optimum and a fitness valley that has to be overcome to reach the next local optimum. The distance between local optima is a parameter~$w$ called the \emph{hurdle width}, and it can be used to parameterise the width of the fitness valleys. For simple evolutionary algorithms like the (1+1)~EA, a larger hurdle width makes the problem harder. This effect was analysed in~\cite{PRUGELBENNETT2004135} with non-rigorous arguments based on simplifying assumptions, that led to approximations of the expected time for finding the global optimum.

Here we provide a rigorous analysis for the expected optimisation time of the (1+1)~EA: we give a tight bound of $\Theta(n^w)$ for the expected optimisation time\footnote{See~\cite[Chapter 3]{Cormen2009} for a definition of asymptotic notation and symbols~$\Theta, O, o, \Omega, \omega$.}, confirming that the performance degrades very rapidly with increasing hurdle width. For hurdle widths growing with~$n$, $w = \omega(1)$, this expected time is superpolynomial and hence intractable.

In contrast, we show that memetic algorithms perform very effectively on this problem class due to their combination of evolutionary operators and local search. We study a simple iterated local search algorithm called (1+1)~MA with two different local searches, \emph{First-Improvement Local Search} (\fils) and \emph{Best-Improvement Local Search} (\bils) \cite{Dinneen2013}, and show that the (1+1)~MA with \bils takes expected time $\Theta(n^2 + n^3/w^2)$ and the (1+1)~MA with \fils takes expected time $\Theta(n^3/w^2)$ to find the optimum. These times are polynomial for all choices of the hurdle width.

Note that the term $n^3/w^2$ decreases with the hurdle width, hence the surprising conclusion is that larger hurdle widths make the problem much harder for evolutionary algorithms, while making the problem easier for memetic algorithms.

The \hurdle problem, albeit having been defined for a very different purpose~\cite{PRUGELBENNETT2004135}, turns out to be an ideal example for showcasing the power of memetic algorithms and iterated local search.
This finding is particularly significant in the light of ``big valley'' structures, an important characteristic of many
hard problems from combinatorial optimisation~\cite{Ochoa2016,Reeves1999}, where ``many local optima
may exist, but they are easy to escape and the gradient,
when viewed at a coarse level, leads to the global optimum''~\cite{Hains2011}.
The \hurdle problem is a perfect and very illustrative example of a big valley landscape. By explaining how the (1+1)~MA easily solves the \hurdle problem class, we hope to gain insight into how memetic algorithms perform on big valley structures, which may help to explain why state-of-the-art memetic algorithms perform well on problems with big valley structures~\cite{Merz1998,Shi2017}.

%    \subsection{Thoughts on Hurdle, to be included}
%
%    The \hurdle function class is interesting for several reasons:
%    \begin{itemize}
%    \item it can be regarded as a slope with barriers that need to be overcome
%    \item it can also be seen as a sequence of fitness valleys that have to be crossed in order to reach the optimum.
%    \item it is a function class of tuneable difficulty, by tuning the width of the fitness valley,
%    \item it resembles a ``big valley'' structure prevalent in many combinatorial problems, where good local optima are located in the same region of the search space.
%    \end{itemize}
%
%    Hurdle is an ideal example for iterated local search, a royal road function.

\subsection{Related Work}

There are other examples of functions where memetic algorithms
were theoretically proven to perform well (see Sudholt~\cite{Sudholt2011c}
for a more extensive survey). In~\cite{Sudholt2006a} examples
of constructed functions were given where the (1+1)~EA, the (1+1)~MA,
and Randomised Local Search (\rls) can mutually outperform each other.
The paper~\cite{Sudholt2009} investigates the impact of the \emph{local
	search depth}, which is often used to limit the number of iterations
local search is run for. The author gives a class of example functions
where only specific choices for the local search depth are effective,
and other parameter settings, including plain evolutionary algorithms
without local search, fail badly. Similar results were obtained for the
choice of the \emph{local search frequency}, that is, how often local
search is run~\cite{Sudholt2006}.
%This function class has a similar fitness profile to \hurdle, but it
%uses a complicated encoding that limits the relevance of the work.

Sudholt~\cite{Sudholt2010} showed for instances of classical
problems from combinatorial optimisation that memetic algorithms
with a different kind of local search, \emph{variable-depth local search},
can efficiently cross huge fitness valleys that are nearly impossible
to cross with evolutionary algorithms.
Witt~\cite{Witt2012a} further analysed the performance of a memetic
algorithm, iterated local search, for the \textsc{Vertex Cover} problem.
Sudholt and Zarges~\cite{Sudholt2010b} investigated the use of
memetic algorithms for the graph colouring problem.
Finally, Wei and Dineen analysed memetic algorithms for
solving the \textsc{Clique} problem, investigating the choice of
the fitness function~\cite{Wei2014a} as well as the choice of
the local search operator~\cite{Wei2014b}.

Gie{\ss}en~\cite{Giessen2013} presented another example function
class based on a discretised version of the well-known Rastrigin
function. He designed a memetic algorithm using a new local search
method called \emph{opportunistic local search}, where the search
direction switches between minimisation and maximisation
whenever a local optimum is reached. This function also resembles
a ``big valley'' structure in two dimensions as the bit string is mapped
onto a two-dimensional space.

Another line of research is work on \emph{hyperheuristics} that combine different operators. Alanazi and Lehre~\cite{Alanazi2014} demonstrated the usefulness of hyperheuristics for artificial functions, and Lissovoi, Oliveto, and Warwicker~\cite{Lissovoi2017a} presented novel, provably efficient hyperheuristic algorithms. The difference to memetic algorithms is that while hyperheuristics typically apply one operator, while learning which operator performs best, memetic algorithms apply different operators, variation and local search, in sequence. The interplay of variation and local search is a major challenge when analysing memetic algorithms.

\subsection{Outline}
The paper is structured as follows. Section~\ref{preliminaries} introduces the
(1+1)~EA, (1+1)~MA as well as the two local searches.
The class of \hurdle problems are then formally defined in
Section~\ref{hurdle-problems}, which also includes detailed description about their
properties. Section~\ref{hybridisation-needed} points out the inefficiency of
the (1+1)~EA and two local search algorithms  by investigating
their expected optimisation times on the \hurdle problems.
Next, tight bounds on the expected optimisation
time of the (1+1)~MA  are  derived in
Section~\ref{memetic-algorithm-efficient}, which
reveals the outperformance of the (1+1)~MA
to the alternative stochastic search methods.
Finally, concluding remarks are given in Section~\ref{conclusions}.

\section{Preliminaries}
\label{preliminaries}
\subsection{(1+1) Evolutionary Algorithm}

%
%Evolutionary Algorithms are inspired by the process of
%natural evolution and molecular genetics
%\cite{Holland1975}\cite{Rechenberg1973}\cite{Schwefel1975}.
%\eas typically start by generating an initial population
%often sampled from a uniform distribution.
%Each individual  is then scored
%by the fitness function. A set of individuals is selected
%from the current population according to
%some  criteria using
%selection operators like
%	rank-based selection, tournament selection, truncation
%selection or $(\mu, \lambda)$ selection
%\cite{Goldberg91acomparative}\cite{Blickle:1996:CSS:1326729.1326732}.
%The rationale behind any selection operator is to
%place a selective pressure, which characterizes the
%emphasis of selection on the best individuals
%\cite{Back94selectivepressure}.
%The selected individuals
%are then modified by standard genetic
%operators like mutation or crossover
%in order to obtain new offspring, which in turn replace a part
%or entire of their parents to form a new
%population for the next generation.
%This process is repeated many times until
%the optimal solution is found or some other
%stopping condition is fulfilled \cite{Holland1975}.

In order to focus on the main differences between evolutionary algorithms and memetic algorithms, and to facilitate a rigorous theoretical analysis, we consider simple bare-bones algorithms from these two paradigms.	

The (1+1)~EA is the simplest evolutionary algorithm, operating with a population of size one and using
only mutation. The mutation
operator flips each bit independently with mutation probability~$p_m$, with the default choice being~$p_m = 1/n$,
where $n$ is the length of the bitstring.
The fitness function is defined as $f \colon \mathcal{X} \rightarrow \mathbb{R}$,
where $\mathcal{X}=\{0,1\}^n$ is the binary search space, and has
to be maximised. Algorithm~\ref{11-ea} gives a full description of the
(1+1)~EA. Here $\mutate(x)$
returns a new bitstring resulting from
flipping bits in $x$ independently with probability
$p_m$.
%Although the (1+1)~EA looks so simple,
%	it is still of theoretical and practical interest.

\begin{algorithm}
	\DontPrintSemicolon
	\caption{(1+1)~EA}
	\label{11-ea}
	$x \sim \unif\{\mathcal{X}\}$\;
	\Repeat{some stopping condition is fulfilled.}{		
		$y \leftarrow \mutate(x,p_m)$\;
		\If {$f(y) \geq f(x)$}{
			$x \leftarrow y$	\;
		}	 			
	}
\end{algorithm}

Practical implementations of Evolutionary Algorithms in particular and other
search metaheuristics in general require to specify
some stopping condition. The simplest is to stop when a fixed
number of generations has been exceeded. The theoretical
results presented in this paper address the limiting case
when the algorithm runs until a global optimum is found.
In this case, we are interested in the
\textit{expected optimisation time} of the algorithm, defined
as the mean of the number of fitness (or function) evaluations
performed by the algorithm
until a global optimum is found.

\subsection{(1+1) Memetic Algorithm}

Algorithm~\ref{11-ma} \cite{Wei2014b} outlines the typical
procedure of the (1+1)~MA, the simplest memetic algorithm. The algorithm consists
of a population of one individual and produces an offspring in each
generation by independently flipping each bit in the current search point
with mutation probability $p_m=1/n$.
The newly generated offspring is then refined
further using a \ls.
Any local search algorithms can fit into the scenario.
Although the (1+1)~MA looks quite simple, it still captures the same
working principle as the general \mas. Analysing it can reveal
insights into how the general \mas operate and when they can
be employed to solve problems.

\begin{algorithm}
	\DontPrintSemicolon
	\caption{(1+1)~MA}
	\label{11-ma}				
	$x \sim \unif\{\mathcal{X}\}$\;	
	\Repeat{some stopping condition is fulfilled.}{			
		$y \leftarrow \mutate(x,1/n)$	\;
		$z \leftarrow \ls(y)$\;
		\If {$f(z) \geq f(x)$}{
			$x \leftarrow z $\;
		}
	}
\end{algorithm}	

\subsection{Local Searches}
%	Local search algorithms are simple optimisation techniques which have been applied
%	to solve many combinatorial optimisation problems. The name ``local'' stems
%	from the fact that these algorithms employ a \textit{move} operator in
%	order to examine all neighbours of the current search point.
%	The move operator often flips a single bit
%	in the bitstring (\ie one-bit mutation). By doing that, it can visit
%	all neighbours of the current solution in order to guide the search
%	to the nearest local optimum.

We consider the following two local searches in the context of the (1+1)~MA. Both local searches are common practice and have also been analysed in~\cite{Dinneen2013}.

\begin{algorithm}
 	\DontPrintSemicolon
 \SetKwInOut{Input}{input}
 \SetKw{to}{to}
 \SetKw{iterations}{iterations}
 \caption{\fils}
 \label{first-ascent}
 \Input{a bitstring $x = (x_1,\ldots,x_n) \in \{0,1\}^n$}
		\For{$\delta$ \iterations}{
	create a random permutation \permute of set $\{1,2,\ldots,n\}$\;
	$\betterfound \leftarrow \textit{false}$\;
	\For{$i=1$ \to $n$}{
		%			$i \leftarrow 1$\;
		%\Repeat{$i>n$}{
		$y \leftarrow \flip\left(x,\permute[i]\right)$\;	
		\If{$f(y) > f(x)$}{
			$x \leftarrow y$\;
			$\betterfound \leftarrow \textit{true}$\;
		}
		%			$i \leftarrow i+1$\;
	}
	\If{\betterfound = false}{
		return $x$\;	
	}
}
return $x$\;
\end{algorithm}

\subsubsection{First Improvement Local Search}(\fils), shown in Algorithm~\ref{first-ascent}, adapted from Wei and Dinneen~\cite{Dinneen2013}, takes advantage of the first improvement it finds while searching the neighbourhood.
The algorithm runs for $\delta$ iterations.
Bits are flipped according to a random permutation
$\permute$ of length $n$ (to avoid any search bias due to the choice of bit positions), and newly generated
individuals are then scored by the	fitness function.
Here $\flip(x,i)$ returns a new bitstring resulting from
flipping the $i$-th bit in  $x$.
The current search point is replaced by the
first neighbour found with a better fitness value.
The algorithm stops either after $\delta$
iterations of the outer for loop have been
performed or after visiting all $n$ neighbours of the current search point
without any improvement.

\subsubsection{Best Improvement Local Search}(\bils), shown in Algorithm~\ref{steepest-ascent}, adapted from Wei and Dinneen~\cite{Dinneen2013}, searches the whole neighbourhood and then picks a search point giving the best improvement.

The algorithm runs for $\delta$ iterations, and in each iteration
a neighbour with the largest improvement in the
fitness among all $n$ neighbours
of the current search point is picked to be
the next search point. In order to keep track of the progress so far, it
stores the best neighbour(s) and best fitness into \curbestind and
\curbestfit, respectively. This means that
whenever a neighbour with better fitness compared to
\curbestfit has been found, the algorithm performs
update on the two variables. At the end of an iteration, if there are
more than one neighbours with the same fitness value that is
better than $f(x)$, then the next search point is chosen uniformly at random
from the set \curbestind.

\begin{algorithm}
	\DontPrintSemicolon
	\SetKwInOut{Input}{input}
	\SetKw{to}{to}
	\SetKw{iterations}{iterations}
	\caption{\bils}
	\label{steepest-ascent}
	\Input{a bitstring $x = (x_1,\ldots,x_n) \in \{0,1\}^n$}
	\For{$\delta$ \iterations}{
		$\curbestind \leftarrow \emptyset$\;
		$\curbestfit \leftarrow f(x)$\;
		\For{$i=1$ \to $n$}{
			$y \leftarrow \flip(x,i)$\;	
			\uIf{$f(y) >\curbestfit$}{
				$\curbestind \leftarrow \{y\}$\;
				$\curbestfit \leftarrow f(y)$\;
			}
			\ElseIf{$f(y) =\curbestfit$}{
				$\curbestind \leftarrow \curbestind \cup \{y\}$\;
			}		
		}
		\uIf{$\curbestind = \emptyset$}{
			return $x$\;
		}
		\Else{
			$x \sim \unif\{\curbestind\}$\;	
		}
	}
	return $x$\;
\end{algorithm}

%\subsubsection{Randomised Local Search}(RLS).
%The algorithm is quite similar to the \fils algorithm
%except instead of iterating over bits
%from left to right, it selects position using a random permutation of the set
%$\{1,2,\ldots,n\}$. The algorithm then flips the corresponding bit and checks
%whether the newly generated offspring has a better fitness value.
%If this is the case, then it replaces the current search point by
%the new offspring and terminates immediately.
%Algorithm~\ref{RLS} gives a full description of the \rls algorithm.
%Here $\permute(n)$ returns a random permutation of the  set $\{1,2,\ldots,n\}$.
%\begin{algorithm}
%	\DontPrintSemicolon
%	\SetKwInOut{Input}{input}
%	\caption{\rls}
%	\label{RLS}
%	\Input{a bitstring $x = (x_1,\ldots,x_n) \in \{0,1\}^n$}
%		\For{$\delta$ iterations}{
%	\For{$i \in \permute(n)$}{ 	 		
%		$y \leftarrow \flip(x,i)$\;
%		\If{$f(y) > f(x)$}{
%			$x \leftarrow  y$\;	
%			\return\;
%		}			 		
%	}
%}
%s\end{algorithm}
%
\section{Class of \hurdle Problems}
\label{hurdle-problems}%

The \hurdle function class was introduced back in 2004 by Pr{\"u}gel-Bennett~\cite{PRUGELBENNETT2004135} as an example class where genetic algorithms with crossover outperform hill climbers. Here we give a formal definition and discuss basic properties of the function that will be used in the subsequent analyses.

The objective is to find a bitstring that maximises the fitness
function ${f \colon \mathcal{X} \to \mathbb{R}}$.
The value of the fitness function at a given bitstring
$x \in \mathcal{X}$ is \cite{PRUGELBENNETT2004135}
$$
f(x) = - \ceil*{\frac{z(x)}{w}} - \frac{\rem(z(x),w)}{w}.
$$
In this function, $z(x)$ is the number of zeros in the
bitstring $x$.
%, \ie $z(x) = |x|_0$\footnote{$|x|_a$ denotes the
%	number of bits with value $a$ in the bitstring $x$.}.
$w \in \{2, 3, \dots, n\}$ is called the hurdle width and is the only parameter of
the \hurdle problems. Note that $w = w(n)$ may be a function of~$n$. Finally, $\rem(z(x),w)$ is the remainder of $z(x)$ divided by $w$, while
$\ceil{\cdot}$ is the ceiling function.

\begin{lemma}
	The global optimum for the \hurdle problem is $1^n$.
\end{lemma}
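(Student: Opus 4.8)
The plan is to exploit the fact that $f$ is a function of unitation: its value at $x$ depends only on the number of zeros $z(x) \in \{0,1,\dots,n\}$. Hence it suffices to study the single-variable function $g \colon \{0,1,\dots,n\} \to \mathbb{R}$ defined by $g(k) = -\ceil*{k/w} - \rem(k,w)/w$ and to show that $g$ attains its maximum uniquely at $k=0$. Since $z(x)=0$ holds if and only if $x = 1^n$, this immediately yields the claim.

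First I would evaluate $g$ at the candidate optimum: for $k=0$ we have $\ceil*{0/w}=0$ and $\rem(0,w)=0$, hence $g(0)=0$. Next I would bound $g(k)$ from above for every $k \ge 1$. Since $k \ge 1$ forces $\ceil*{k/w} \ge 1$, and since $\rem(k,w)/w \ge 0$ in all cases, we obtain $g(k) \le -1 < 0 = g(0)$ for all $k \in \{1,\dots,n\}$. Therefore $k=0$ is the unique maximiser of $g$, and correspondingly $1^n$ is the unique global optimum of $f$.

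There is no real obstacle here; the only point needing a moment's care is the strictness of the inequality $g(k) < g(0)$ for $k \ge 1$, which follows from $\ceil*{k/w}$ being a positive integer whenever $k \ge 1$, so that the term $-\ceil*{k/w}$ alone already drops the value to at most $-1$ irrespective of the fractional remainder term. It may also be worth remarking, for use in the later sections, that $g$ is strictly decreasing in $k$ except that it increases by $1 - 1/w$ whenever $k$ crosses a multiple of $w$ (so that the local optima of $f$ correspond exactly to bit strings with $z(x)$ a multiple of $w$); but this structural observation is not required for the present lemma.
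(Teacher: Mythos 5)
Your proof is correct and follows essentially the same elementary route as the paper: both arguments verify $f(1^n)=0$ and show every other search point has strictly negative fitness (the paper argues that equality in $f(x)\le 0$ forces $z(x)=\rem(z(x),w)=0$, while you note $\ceil*{k/w}\ge 1$ for $k\ge 1$ gives the slightly sharper bound $g(k)\le -1$). No issues.
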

\begin{proof}
	For every $x \in \{0, 1\}^n$, $f(x)\le 0$ since
    both $z(x)$ and $\rem(z(x),w)$ cannot be
    negative. The equality happens if and only if both
    $z(x)$ and $\rem(z(x),w)$ equal zero, or equivalently
    $x=1^n$.
\end{proof}
\begin{figure}
		\centering
		\includegraphics[width=.58\linewidth]{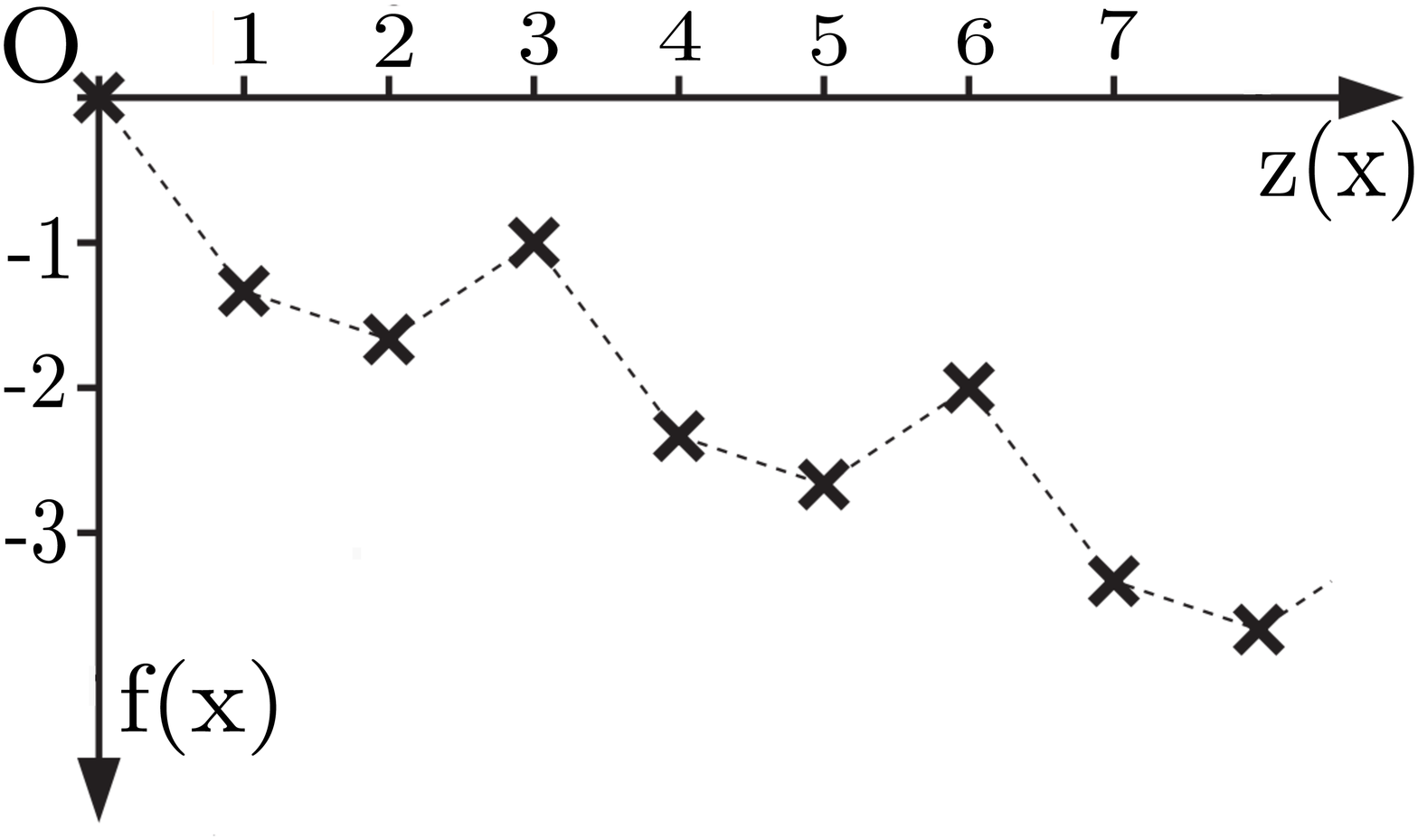}
	\caption{Fitness landscapes of \hurdle with $w=3$.}
	\label{fig:hurdle-landscape}
\end{figure}
Note in particular that $z(x)$ can also
be viewed as the Hamming distance $H(x,1^n)$ between
the current solution~$x$ and the global optimum $1^n$.
The fitness landscapes close to the global optimum are
shown in Fig.~\ref{fig:hurdle-landscape} \cite{PRUGELBENNETT2004135}.
It can be clearly seen that the global optimum $1^n$ coincides with
the origin where both $z(1^n)=0$ and $f(1^n)=0$.
%The following lemma makes use of two terms,
%\textit{nearest} and \textit{next},
%to indicate search points whose number of zeros is less than that
%of the current search point by a given value.
In the following lemma, the term \emph{nearest} refers to the scale of $z(x)$, i.\,e.\ the most similar number of zeros. Note that this relates to Hamming distances as follows: any search point with $z(x)$ zeros has Hamming distance at least $|i - z(x)|$ to any search point with $i$ zeros. A sufficient condition for a mutation of $x$ having $i$ zeros is flipping $i - z(x)$ zeros and no other bits if $i \ge z(x)$ or flipping $z(x) - i$ ones if $i \le z(x)$.

\begin{lemma}
	\label{hurdle-property}
%	Given a \hurdle problem with hurdle width $w$
%	and a local optimum $x \neq 1^n$ as the current search point,
%    the nearest search point
%    $x'$ with $f(x') > f(x)$ is the next
%    local optimum\footnote{There are often more
%    than one search point $x'$, except it is $1^n$.
%    All have the same number of zeros with different orientation.},
%	where $z(x)-z(x') = w$.	
	Given a \hurdle problem with hurdle width $w$
	and a local optimum $x \neq 1^n$ as the current search point,
    the nearest search points with fitness larger than $f(x)$ are all search points with $z(x) - w$ zeros: $\{x' \mid z(x') = z(x) - w\}$.
%    $x'$ with $f(x') > f(x)$ is the next
%    local optimum\footnote{There are often more
%    than one search point $x'$, except it is $1^n$.
%    All have the same number of zeros with different orientation.},
%	where $z(x)-z(x') = w$.	
\end{lemma}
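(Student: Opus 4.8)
The plan is to use that the \hurdle function is a function of unitation: if I define $g\colon\{0,1,\dots,n\}\to\mathbb{R}$ by $g(k) := -\lceil k/w\rceil - \rem(k,w)/w$, then $f(x')=g(z(x'))$ for every $x'$, and the whole statement becomes a claim about the one-dimensional function $g$. First I would write $k = qw+r$ with $q=\lfloor k/w\rfloor$ and $r=\rem(k,w)\in\{0,\dots,w-1\}$ and record the closed form $g(k)=-q$ if $r=0$ and $g(k)=-(q+1)-r/w$ if $1\le r\le w-1$. This exposes the block structure of $g$: on each block $\{qw,qw+1,\dots,(q+1)w\}$ the value $g(qw)=-q$ is strictly largest, $g((q+1)w)=-(q+1)$ is strictly second largest, and the interior values $g(qw+1)>g(qw+2)>\dots>g(qw+w-1)$ all lie in the open interval $(-(q+2),-(q+1))$.

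Next I would locate the local optima. Flipping one bit of a search point changes its number of zeros by exactly $\pm1$, so it suffices to understand $g$ on consecutive integers. If $z(x')=qw+r$ with $1\le r\le w-1$, then $z(x')\ge 1$, so $x'$ has a zero-bit; flipping it yields a point with $qw+r-1$ zeros whose $g$-value is strictly larger (by the closed form, $g(qw+r-1)-g(qw+r)$ equals $1/w$ if $r\ge2$ and $1+1/w$ if $r=1$), so $x'$ is not a local optimum. Hence a local optimum $x\neq 1^n$ must satisfy $z(x)\equiv 0\pmod w$, say $z(x)=qw$ with $q\ge 1$ (and $g(qw\pm1)<g(qw)$ confirms these points really are local optima). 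In particular $z(x)-w=(q-1)w\ge 0$ is a well-defined number of zeros.

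The heart of the argument is to determine $S:=\{k'\in\{0,\dots,n\}: g(k')>g(z(x))\}=\{k': g(k')>-q\}$. Writing $k'=q'w+r'$ as before, I would split on $r'$. If $r'=0$ then $g(k')=-q'>-q$ exactly when $q'\le q-1$, i.e.\ $k'\le(q-1)w$. If $r'\ge1$ then $g(k')=-(q'+1)-r'/w\in(-(q'+2),-(q'+1))$, so $g(k')>-q$ forces $q'+1<q$, i.e.\ $q'\le q-2$, and conversely $q'\le q-2$ gives $g(k')>-(q'+2)\ge-q$; moreover $q'\le q-2$ together with $r'\le w-1$ is equivalent to $k'\le(q-1)w-1$. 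Combining the two cases gives $S=\{0,1,\dots,(q-1)w\}=\{0,1,\dots,z(x)-w\}$. Since every element of $S$ is at most $z(x)-w<z(x)$, the element of $S$ nearest to $z(x)$ is its maximum $z(x)-w$, and it is the unique nearest one (the next element, $z(x)-w-1$, is one further away). By the remark preceding the lemma, the smallest Hamming distance from $x$ to a search point with $j$ zeros is $|j-z(x)|$, so among all search points with fitness exceeding $f(x)$, those whose number of zeros is closest to $z(x)$ are exactly the search points with $z(x)-w$ zeros, which is the assertion. I expect the only delicate point to be the $r'\ge1$ case of the case split -- correctly tracking the interaction of the ceiling and the fractional term, and checking the boundary cases $q=1$ (where $S=\{0\}$) and $z(x)-w=0$ -- but this is a short finite verification rather than a genuine obstacle.
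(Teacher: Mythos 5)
Your proof is correct and follows essentially the same route as the paper's: write the number of zeros as $qw+r$, evaluate the closed form of the unitation function, and compare fitness values across residues modulo $w$. You are in fact somewhat more thorough than the paper --- you prove that local optima must have a multiple of $w$ zeros and you characterise the entire superlevel set $S$ (including points with \emph{more} zeros than $z(x)$), both of which the paper's proof takes for granted by restricting attention to $z(x_i)=kw-m$ with $0<m\le w$.
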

\begin{proof}
	
	\begin{figure}[!ht]
		\centering
		\includegraphics[scale=.21]{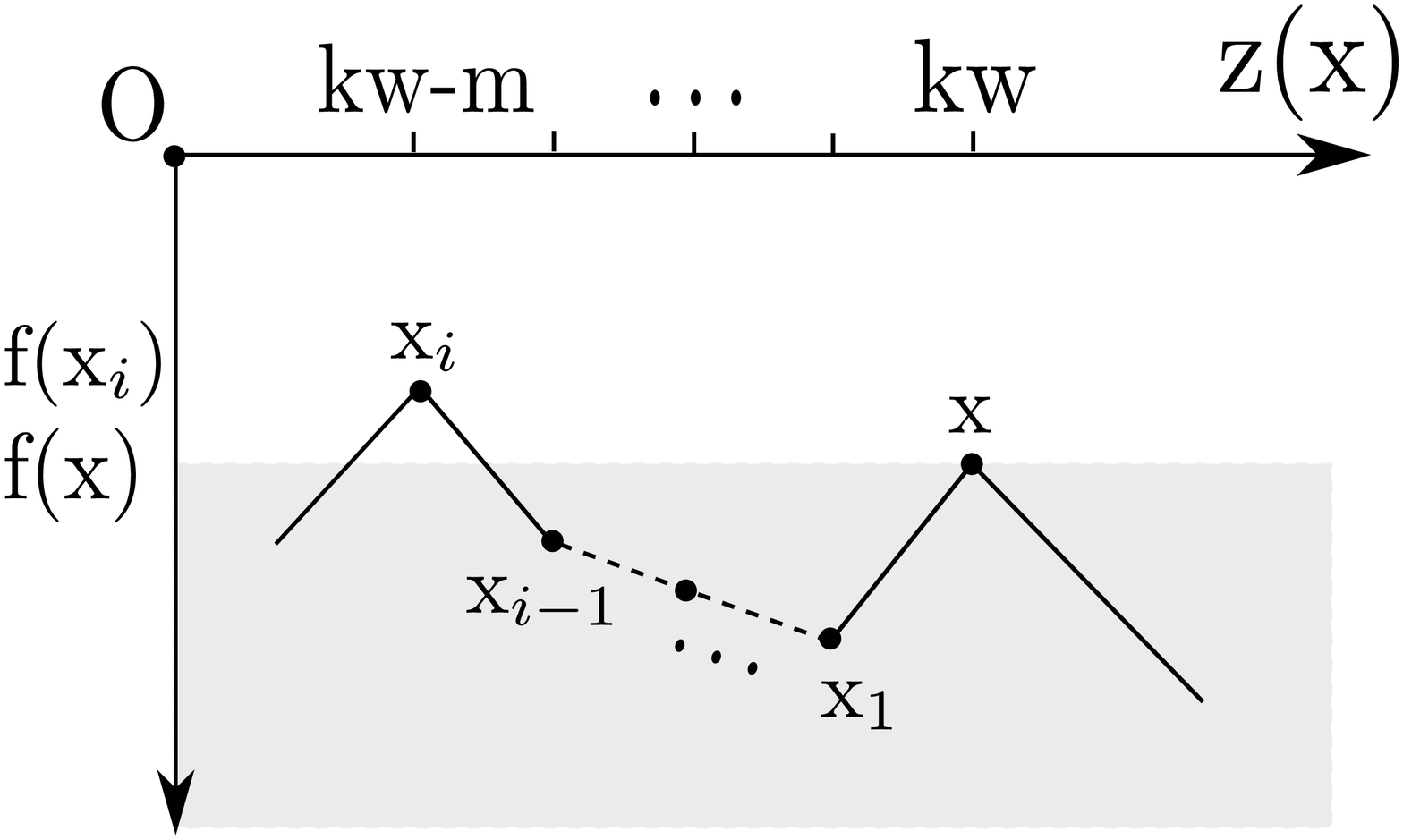}	
		\caption{Fitness landscape of \hurdle problem with arbitrary $w$.}
		\label{fig:best-on-hurdle}				
	\end{figure}
	The current local optimum $x\neq 1^n$contains
	$z(x) = kw$ zeros where $k \in \mathbb{N}\setminus\{0\}$ (see Fig.~\ref{fig:best-on-hurdle}).
	Let us consider a search point $x_i$ which is the nearest search point
	with 	$f(x_i) \geq f(x)$ and $z(x_i)=kw-m < z(x)$
	where $0< m \leq w$. Here, we exclude the case $m>w$ as
	the next local optimum corresponds to $m=w$, and its fitness value
	is already known to be better than $f(x)$.
	
	Now we  need to calculate the fitness values for two search points, $x$ and $x_i$.
	Note that $z(x)/w = kw/w = k$, and $\rem(z(x),w) = \rem(kw,w) = 0$, then
	$$
	f(x) = - \ceil*{\frac{z(x)}{w}} - \frac{\rem(z(x),w)}{w}=-\ceil*{k}-\frac{0}{w} =  -k.
	$$
	
	On the other hand, $z(x_i)/w = (kw-m)/w = k-m/w$, and
	$\rem(z(x_i),w) =\rem(kw-m,w)=w-m$ as we can rewrite
	$z(x_i) = kw-m = (k-1)w+(w-m)$, then
	$$
	f(x_i) = - \ceil*{k-\frac{m}{w}} - \frac{w-m}{w} =  - \ceil*{k-\frac{m}{w}} - 1+\frac{m}{w}.
	$$
	
	Now we consider two different cases as follows.
	If $m=w$, then $m/w=1$ and $f(x_i) = -(k-1)-1+1 = -k+1 >  f(x)$;
	otherwise, $m/w <1$, and $1-\frac{m}{w}>0$, then
	$f(x_i) = -k-\left(1-\frac{m}{w}\right) < f(x)$.
	
	For all $0<m \leq w$, we only have $f(x_i) >f(x)$ if and
	only if $m=w$, and then $z(x_i)=(k-1)w$
	where $k \in \mathbb{N}\setminus\{0\}$.
	This result implies that $x_i$ must be a local optimum.
	This proof also shows that the difference
	in the fitness values of two consecutive local optima
	is exactly one.
\end{proof}

\section{Why Is Hybridisation Necessary?}
\label{hybridisation-needed}
\subsection{Local Searches}
In this section, we show that local search algorithms in general are unable to
optimise  the \hurdle problems, unless the initial search point is chosen from a
specific regions in the fitness landscape.

Let $z$ denote the number of zeros in the initial search point.
It is obvious that if $z\geq w$, then the local search algorithm cannot locate the
global optimum as it gets stuck at a local one forever. Otherwise, the global optimum
can be found with some probability. However, if the
local search is allowed to run only once, then
for $w \ll n/2$ the chance that it can optimise the \hurdle problems is close to zero since the
number of search points with at most $w-1$ zeros is
significantly smaller compared to the size of the search space, i.e. $2^n$.	
One way to overcome this problem is to employ a \textit{restart} mechanism, which restarts the local search algorithm once a local optimum has been found.

\begin{theorem}\label{LS-ineficient}
	The expected optimisation time of local search
	algorithms \bils and \fils with $\delta \ge w$,
    restarting after $\delta$ iterations of the local search,
	on \hurdle problems with hurdle width $w\leq cn$ for some constant $c <1/2$
	is $2^{\Omega(n)}$.
\end{theorem}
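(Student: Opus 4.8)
The plan is to show that a \emph{single} run of either local search can reach the global optimum $1^n$ only if the uniformly random bitstring it was started from happens to contain fewer than $w$ zeros, and that this event has probability $2^{-\Omega(n)}$ because $w\le cn$ with $c<1/2$ a constant. Since each restart draws an independent uniform bitstring, the number of restarts until the first successful run is then a geometric random variable with success probability at most $2^{-\Omega(n)}$, so its expectation is $2^{\Omega(n)}$; as every run performs $\Omega(n)$ function evaluations, the expected optimisation time is $2^{\Omega(n)}$.

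The structural core is the claim: \emph{for both \bils and \fils, a run started at a search point with $z_0\ge w$ zeros never visits $1^n$.} I would prove it in two steps. First, both local searches only ever accept strictly fitness-improving single-bit flips — \fils replaces $x$ only when $f(y)>f(x)$, and \bils moves to a member of \curbestind only when \curbestind is non-empty, i.e.\ only when some neighbour has fitness strictly above $f(x)$ — so along any run the number of zeros changes by exactly $\pm 1$ each time $x$ is updated. Second, every search point with exactly $w$ zeros is a \emph{strict} local optimum: it has fitness $-1$, while its neighbours, having $w-1$ or $w+1$ zeros, have fitness at most $-1-1/w<-1$ (cf.\ Lemma~\ref{hurdle-property}); hence, whenever a run reaches such a point, \bils finds \curbestind empty and returns it, and \fils accepts no flip in the current pass nor in any subsequent pass (since $x$ is then frozen) and also returns it. Putting the two steps together: a run started with $z_0\ge w$ keeps its zero-count $\ge w$ — a $\pm 1$ step from a value $\ge w+1$ stays $\ge w$, and the run halts the first time the count equals $w$ — so it never reaches zero zeros and never visits $1^n$. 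Contrapositively, a successful run must have started with $z_0\le w-1$.

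To conclude, note that the number of zeros of a uniformly random bitstring is distributed as $\mathrm{Bin}(n,1/2)$, so by a Chernoff/Hoeffding bound together with $w-1<w\le cn$ we get $\prob{\text{a run succeeds}}\le\prob{\mathrm{Bin}(n,1/2)\le cn}\le e^{-2(1/2-c)^2 n}=2^{-\Omega(n)}$, the exponent being $\Theta(n)$ since $c<1/2$ is constant. Feeding this into the restart argument gives an expected number of restarts of $2^{\Omega(n)}$, and multiplying by the $\Omega(n)$ function evaluations performed in each run (the inner for-loop of both local searches always iterates $n$ times) yields the claimed $2^{\Omega(n)}$ bound. Incidentally, the hypothesis $\delta\ge w$ is not actually needed here — any $\delta\ge 1$ works — because the claim never uses that a run exhausts its iteration budget or reaches a local optimum within it.

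The step I expect to be the main obstacle is making the structural claim watertight uniformly for the two local searches: in particular, verifying that \fils cannot ``step over'' a hurdle within a single pass even though it updates $x$ repeatedly mid-pass, and handling \bils's random tie-breaking among equally good neighbours. Both concerns evaporate once one observes that every improving single-bit move changes the zero-count by $\pm 1$ and that the search is frozen the instant its zero-count becomes a positive multiple of $w$, so the zero-count can never cross the hurdle at $z=w$ — which is exactly what blocks the path to $1^n$.
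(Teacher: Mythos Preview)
Your proposal is correct and follows essentially the same approach as the paper: show that a single local-search run succeeds only if the random start has fewer than $w$ zeros, bound that probability by $2^{-\Omega(n)}$ via Chernoff, and conclude via the geometric restart argument. Your version is more careful than the paper's, which simply asserts that ``it is clear'' the start point must have at most $cn-1$ zeros; you actually justify the structural claim via the $\pm 1$ zero-count dynamics and the strict-local-optimum barrier at $z=w$, and you correctly note that the hypothesis $\delta\ge w$ is not needed for the lower bound.
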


	We focus on $w \leq cn$ for some constant $c<1/2$ as, otherwise,
	the majority of search points would lie
	in the basin of attraction of the global optimum,
	resembling the function \om\footnote{The ones-counting
		problem, i.e. $\om(x):= \sum_i x_i$.}.
	
\begin{proof}[Proof of Theorem~\ref{LS-ineficient}]
	The local search algorithm flips one bit and only accepts new
	search points with strictly better fitness value compared to the current one
	in each iteration; therefore, the initial search point
	\textit{decides} whether the global optimum can be
	reached. It is clear that this search point needs to
	have at most $cn-1$ zeros in order for the algorithm
	to be able to optimise the problem (see Fig.~\ref{fig:hurdle-landscape}).
	By Chernoff bounds \cite{Motwani1995}, this
	event happens with probability at most
	$2^{-\Omega(n)}$.
	The expected number of restarts until this event happens
	is at least $2^{\Omega(n)}$.
%Once the ``right'' initial search
%	point is sampled, the \bils algorithm, for example, takes
%	$\Omega(n^2)$ steps further to locate the
%	global optimum.
Since every restart clearly leads to at least one function evaluation,
the expected optimisation time of local search
	algorithms on \hurdle problems is $2^{\Omega(n)}$.
\end{proof}

\subsection{(1+1) Evolutionary Algorithm}
In this section, we prove a tight bound $\Theta(n^{w})$
on the expected optimisation time of the (1+1)~EA on the \hurdle
problems with an arbitrary hurdle width $2 \le w \le n/2$.
This result implies that the (1+1)~\ea is not efficient on \hurdle. Our rigorous analysis complements the non-rigorous arguments given in~\cite{PRUGELBENNETT2004135}.
Note in particular that starting
from an initial search point with at least $w$ zeros,
    the first generation will end in a local optimum
    after at most $w$ iterations of the local search with $\delta\ge w$.
    Since this has a small
    \textit{additive} contribution to the overall runtime, we can assume that
    a local optimum is the current search point.

\begin{theorem}
	The expected optimisation time of the (1+1)~EA on the
	\hurdle problem with hurdle width $w$ is $\mathcal{O}(n^w)$.
\end{theorem}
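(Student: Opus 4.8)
The plan is to apply the classical fitness-level (\emph{$f$-based partition}) upper bound. Since $f$ depends only on $z(x)$ and is injective as a function of $z(x)$ --- from the explicit formula the local optima (the $x$ for which $z(x)$ is a multiple of $w$) take the distinct integer values $0,-1,-2,\dots$, while every other value of $z(x)$ yields a distinct non-integer value --- the sets $A_i:=\{x:z(x)=i\}$, $i=0,1,\dots,n$, are exactly the fitness levels of $f$, with $A_0=\{1^n\}$ the optimum. Because the (1+1)~EA accepts only non-worsening offspring, its current fitness never decreases, so it visits each $A_i$ at most once; hence $\expect{T}\le\sum_{i=1}^{n}1/s_i$, where $s_i$ is any lower bound on the probability that one mutation of a search point with $i$ zeros produces an offspring of strictly higher fitness (which is then accepted).

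Next I would lower-bound $s_i$ in two regimes. If $i$ is not a multiple of $w$, the formula gives $f(i-1)>f(i)$, so flipping exactly one of the $i$ zero-bits and no other bit already produces a strictly fitter offspring; this has probability at least $(i/n)(1-1/n)^{n-1}\ge i/(en)$, whence $1/s_i\le en/i$. If $i=kw$ with $k\ge1$ (a local optimum different from $1^n$), then by Lemma~\ref{hurdle-property} every strictly fitter search point has at most $(k-1)w$ zeros, and a sufficient event is to flip exactly $w$ of the $kw$ zero-bits and no other bit, which reaches $z=(k-1)w$; this has probability at least $\binom{kw}{w}(1/n)^w(1-1/n)^{n-w}\ge \binom{kw}{w}/(en^w)$, using $(1-1/n)^{n-w}\ge (1-1/n)^{n-1}\ge 1/e$ (valid since $w\ge1$). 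Hence $1/s_{kw}\le en^w/\binom{kw}{w}$.

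Finally I would add up the two contributions. Summing over the non-local-optimum levels gives at most $\sum_{i=1}^{n} en/i = en\sum_{i=1}^{n}1/i = O(n\log n)$, and summing over the local-optimum levels $i=w,2w,\dots,\lfloor n/w\rfloor w$ gives at most $\sum_{k=1}^{\infty} en^w/\binom{kw}{w}$. With the standard estimate $\binom{kw}{w}\ge (kw/w)^w=k^w$ this is at most $en^w\sum_{k=1}^{\infty}k^{-w}=en^w\zeta(w)$, and since $w\ge2$ we have $\zeta(w)\le\zeta(2)=\pi^2/6<2$, so the local optima contribute only $O(n^w)$. Adding up, $\expect{T}=O(n\log n)+O(n^w)=O(n^w)$, since $w\ge2$. (Equivalently one may first invoke the remark preceding the theorem --- that from any start the EA reaches a local optimum within expected $O(n\log n)$ steps --- and then apply the fitness-level argument only over the local-optimum levels; the bookkeeping is identical.)

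I do not foresee a real obstacle. The one claim that deserves care is the one underpinning the fitness-level method: that the accepted $z$-process is monotone in fitness and hence visits each level $A_i$ at most once. This relies on the sawtooth shape of $f$ together with the fact that the (1+1)~EA rejects every fitness-decreasing step --- it may accept fitness-neutral steps, but those leave $z(x)$ unchanged and so do not affect any $s_i$. It is also worth spelling out \emph{why} the exponent is $w$: the escape probability $\binom{kw}{w}/(en^w)$ from a local optimum is smallest at $k=1$, where it is $(1/n)^w(1-1/n)^{n-w}=\Theta(n^{-w})$ --- the last hurdle before $1^n$ is the hardest --- so the series over $k$ is dominated by its first term and the overall bound is $\Theta(n^w)$; this is precisely the mechanism by which a larger hurdle width slows the (1+1)~EA down, and it is consistent with the matching lower bound proved separately.
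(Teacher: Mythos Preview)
Your proposal is correct and follows essentially the same approach as the paper: both use the fitness-level method, lower-bound the escape probability from a local optimum with $z=kw$ zeros by the probability $\binom{kw}{w}n^{-w}(1-1/n)^{n-w}\ge k^w/(en^w)$ of flipping exactly $w$ zero-bits, and sum the resulting waiting times to obtain $O(n^w)$. The only difference is bookkeeping: you split the levels into local optima (contributing $en^w\zeta(w)=O(n^w)$) and non-local-optima (contributing $O(n\log n)$ via single-bit flips), whereas the paper applies the $w$-bit-flip bound uniformly to every $z\ge w$ and then controls the sum $\sum_{z=w}^n z^{-w}$ by an integral; both routes give the same asymptotic bound, and your separation is arguably cleaner since it makes explicit that the non-peak levels are negligible.
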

\begin{proof}
	Assume that a local optimum $x$ be the current search point with $z$ zeros.
%	The difference in the number of zeros between two consecutive local optima
%	now becomes $w$. 
Lemma~\ref{hurdle-property} yields that the nearest
	search points with a better fitness  than $f(x)$ are all local
	optima with $z-w$ zeros. For such a mutation we just need to flip
	at least $w$ zeros simultaneously, and keep $n-w$ remaining bits unchanged.
	The probability of flipping $w$ bits is given by $\left(1/n\right)^w$, and
	keeping $n-w$ bits unchanged is $\left(1-1/n\right)^{n-w}$.
	Therefore, the probability of obtaining a better solution is bounded from below by
	$p_z \geq \binom{z}{w} \left(\frac{1}{n}\right)^w \left(1-\frac{1}{n}\right)^{n-w}
	\geq \frac{1}{en^w}\left(\frac{z}{w}\right)^w$
    since $\left(1-\frac{1}{n}\right)^{n-1}\ge 1/e$ and
    $\binom{z}{w}\ge (z/w)^w$ \cite[Appendix C]{Cormen2009}.
	The expected number of generations until the global optimum found
	is now bounded from above by
	\begin{equation}
	\label{eq:fitness-level-bound}
	\mathbb{E}\left[T\right]
	\leq \sum_{z=w}^{n}\frac{1}{p_z}
	\leq en^w w^w\sum_{z=w}^{n}\frac{1}{z^w}.
	\end{equation}
	Now we need to calculate the sum $\sum_{z=w}^{n}\frac{1}{z^w}$.
	Let us consider the function
	$
	g(z) = z^{-w},~ \forall z \in [w,n].
	$
	Since
	$
	\nabla_z g=-w/z^{w+1} < 0$,
	$g$~is a monotonically decreasing function in $[w,n]$. So
	we can approximate the upper bound of this sum using a
	method called \textit{approximation by integrals}, i.e.
	$
	\sum_{x=a}^{b}g(x) \leq \int_{a-1}^{b}g(x)dx
	$
     \cite[Appendix A]{Cormen2009}.
	Applying this method yields
	%	$$
	%	\sum_{z=w}^{n}\frac{1}{z^w} \leq  \int_{w-1}^{n}\frac{1}{z^w}dz \le \int_{w-1}^{\infty}\frac{1}{z^w}dz = (w-1)^{-w}.
	%	$$
	\begin{align*}
		\sum_{z=w}^{n}\frac{1}{z^w} =\;& w^{-w} + \sum_{z=w+1}^{n}\frac{1}{z^w} \\
		\leq\;&  w^{-w} + \int_{w}^{n}\frac{1}{z^w}dz\\
		\le\;& w^{-w} + \int_{w}^{\infty}\frac{1}{z^w}dz = w^{-w} + w^{-w} \cdot \frac{w}{w-1} \le 3w^{-w}.
	\end{align*}
	Substituting this into~\eqref{eq:fitness-level-bound},
	we get $\mathbb{E}\left[T\right] \leq 3en^w$.
	%	
	%	The optimization time of (1+1)~EA on \hurdle with an arbitrary hurdle width $w$ is given by $\mathcal{O}(n^w)$.
\end{proof}

To derive the lower bound, we again focus on $w \le n/2$ for the
same line of arguments as in Theorem~\ref{LS-ineficient}.

\begin{theorem}
	\label{the:lower-bound-EA}
	The expected optimisation time of the (1+1)\;\ea on \hurdle problems
	with hurdle width $2 \le w \le n/2$ is $\Omega(n^w)$.
\end{theorem}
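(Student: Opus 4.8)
The plan is to show that with constant probability the (1+1)~\ea reaches the local optimum with exactly $w$ zeros, and that from there it needs $\Omega(n^w)$ further function evaluations. Write $z_t$ for the number of zeros of the current search point after generation $t$. Since the (1+1)~\ea only accepts offspring of at least the current fitness and, by the computations in the proof of Lemma~\ref{hurdle-property}, $f$ is a strictly decreasing function of the number of zeros, the sequence $(z_t)_{t\ge 0}$ is non-increasing, and any generation that strictly decreases $z_t$ from some value $m$ to a value $m'$ must flip at least $m-m'$ zero-bits. I would use these two observations throughout.

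First I would analyse the endgame. Suppose the algorithm ever reaches a search point $x$ with $z(x)=w$, so $f(x)=-1$. From such an $x$ the only offspring with fitness at least $-1$ are $1^n$ (fitness $0$) and search points that again have exactly $w$ zeros (fitness $-1$); any offspring with $1\le z\le w-1$ has fitness $-1-z/w<-1$ and any offspring with more zeros is worse. Hence, once $z_t=w$, the process stays at $z=w$ (possibly changing which bits are zero) until the generation that produces $1^n$, and producing $1^n$ requires flipping precisely the $w$ zero-bits and no one-bit, which has probability $(1/n)^w(1-1/n)^{n-w}\le n^{-w}$ in every such generation. Therefore the number of generations spent at $z=w$ before the optimum is found is geometrically distributed with success probability at most $n^{-w}$, so its expectation — and hence $\mathbb{E}[T\mid\text{the run ever has }z_t=w]$ — is at least $n^w$; each generation costs one function evaluation.

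It remains to show that the algorithm reaches $z=w$ with probability $\Omega(1)$. A Chernoff bound together with $w\le n/2$ gives $\Pr(z_0\ge w)=\Omega(1)$ (in fact $1-2^{-\Omega(n)}$ when $w\le(1/2-\varepsilon)n$), which is why we restrict to $w\le n/2$, just as for \om in Theorem~\ref{LS-ineficient}; condition on this event. Because $(z_t)$ is non-increasing and must eventually reach $0$, the only way to avoid ever having $z_t=w$ is to \emph{jump over} $w$, i.e.\ to make, at some generation, an accepted move from a state with $z\ge w+1$ zeros directly to a state with at most $w-1$ zeros. From a state with $z=w+i$ zeros ($i\ge 1$) such a jump requires flipping at least $i+1$ zero-bits simultaneously, whereas decreasing $z$ by exactly one (towards $w$) is already achieved by flipping a single zero-bit. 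Estimating the relevant binomial probabilities and using $w+i\le n/2+i$, one checks that, conditioned on the algorithm leaving the ``level'' $z=w+i$ by some $z$-decreasing move, the probability that this move overshoots below $w$ is of order $\tfrac{1}{(i+1)!}(w/n)^i=O(2^{-i}/i!)$, so summing over all levels $i\ge 1$ the total overshoot probability is a constant strictly less than $1$. Hence the algorithm reaches $z=w$ with probability $\Omega(1)$, and combining the three parts yields $\mathbb{E}[T]\ge\Omega(1)\cdot\Omega(1)\cdot n^w=\Omega(n^w)$.

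The main obstacle I anticipate is the overshoot estimate in the last step: one has to rule out that the (1+1)~\ea slips past the last local optimum in a single large mutation. Since such a jump can in principle originate from any of the (non-optimal, and not necessarily locally optimal) states with more than $w$ zeros, the argument needs a union bound over levels $z=w+1,w+2,\dots$, combined with the fact that the per-level overshoot probability decays geometrically — and it is precisely here that the hypothesis $w\le n/2$ is used to make that geometric series sum to a constant below $1$ (for $w$ close to $n$ the bound fails, consistently with the landscape degenerating towards \om).
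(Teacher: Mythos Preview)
Your overall strategy matches the paper's: show that the local optimum with exactly $w$ zeros is reached with probability $\Omega(1)$ and that from there $\Omega(n^w)$ generations are needed. The endgame analysis from $z=w$ is correct. However, your claim that ``$f$ is a strictly decreasing function of the number of zeros'' is false---this is precisely what distinguishes \hurdle from \om, and you yourself compute two sentences later that a search point with $1\le z\le w-1$ has fitness $-1-z/w<-1$, i.e.\ \emph{worse} than at $z=w$. Consequently $(z_t)$ is \emph{not} non-increasing: from a non-locally-optimal state with $z=kw+r$ ($0<r<w$) the (1+1)~\ea accepts a move to the local optimum at $z=(k{+}1)w$, increasing~$z$. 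Your overshoot argument as written leans on this monotonicity, and the per-level bound you quote, of order $(w/n)^i/(i{+}1)!$, is not right for large $i$ since the relevant binomial coefficients involve $w+i$ zeros rather than $w$; with the correct per-level estimate a na\"ive sum over all $i\ge 1$ no longer stays below~$1$.

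The paper sidesteps these difficulties by invoking a known result (Lemma~9 of Paix\~ao et al.~\cite{Paixao2016}): conditional on standard bit mutation producing an offspring with at most $w$ zeros, the probability that it has \emph{exactly} $w$ zeros is at least~$1/2$. Since from any state with $z>w$ every offspring with $z'\le w$ is accepted on \hurdle, applying this lemma at the first hitting time of $\{z\le w\}$ immediately yields the required $\Omega(1)$ probability of visiting $z=w$. Your route can be repaired in the same spirit---let $T$ be the first time $z_T\le w$, observe that the events $\{z_{T-1}=w+i\}$ partition the sample space, and bound $\Pr(z_T<w)$ by the \emph{maximum} over $i$ of $\Pr(z_T<w\mid z_{T-1}=w+i,\,z_T\le w)\le ew/((i{+}1)n)\le e/4$ rather than by a sum---but this argument neither needs nor may assume that $(z_t)$ is monotone.
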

\begin{proof}
	We first show that a search point with $w$ zeros is reached
	with probability~$\Omega(1)$. The initial number of zeros
    follows a binomial 	distribution with parameters
    $n$ and $1/2$. As $w \le n/2$, the probability
	that the initial search point will have at least $w$ zeros is at
	least $1/2$ (by symmetry of the binomial distribution). It
	may be possible to ``jump over'' search points with
	$w$ zeros, \ie to make a transition from $i > w$ zeros
	to $j < w$ zeros, so long as the \hurdle value with $i$ zeros
	is less or equal to that of $j$ zeros. However, Lemma~9
	in~\cite{Paixao2016} states that the conditional probability
	of standard bit mutation reaching a search point with $w$ zeros, given that a
	search point with \emph{at most} $w$ zeros is reached,
	is at least~$1/2$.
	
	Once the (1+1)\;\ea has reached such a search point with $w$ zeros,
	the expected remaining optimisation time is
	${n^w (1-1/n)^{-n+w}} = \Omega(n^w)$ as the optimum is
	the only search point with a higher fitness
	(see Lemma~\ref{hurdle-property}) and the probability
	of jumping to the optimum from any such search point is
    $n^{-w} (1-1/n)^{n-w}$. By the law of total expectation, the expected optimisation time
    is at least $\Omega(1)\cdot \Omega(n^w) = \Omega(n^w)$.
\end{proof}

We remark that the (1+1)~\ea can be slightly sped up
by increasing the mutation rate. As the above analysis has shown, the expected optimisation
time is dominated by the time to locate the global optimum
from a search point with Hamming distance~$w$ to it.
From this starting point, a mutation rate of $w/n$ maximises
the probability of flipping exactly $w$ bits. In fact, this choice
was already used in~\cite{PRUGELBENNETT2004135}
for the (1+1)~EA. However, even choosing an optimal
mutation rate does not help much as, even if a mutation
does flip exactly $w$ bits, the mutation needs to select
the right bits to flip, and the chance of choosing exactly
the $w$ bits that differ from the optimum is $1/\binom{n}{w}$,
leading to an expected time of at least
$\binom{n}{w} \ge n^w/w^w$ \cite{Cormen2009} from such a local optimum
(and $\Omega(\binom{n}{w}) \ge \Omega(n^w/w^w)$ from
random initialisation). This still results in a superpolynomial
expected time if $w = \omega(1)$, that is, $w$ grows with~$n$.

\section{Memetic Algorithms Are Efficient}
\label{memetic-algorithm-efficient}

%The previous section claims
%that local search algorithms and the (1+1)~EA are
%not efficient in optimising the class of \hurdle problems.
%However, it is not the case for the (1+1)~MA, and
%in this section we shall prove that the (1+1)~MA
%requires only $\Theta(n^3/w^2)$  time in expectation
%to address the same problem.

We now show that, in contrast to local search on its own, and evolutionary algorithms, the (1+1)~MA can find the global optimum efficiently, for both \bils and \fils. The main result of this section is as follows.
Note in particular we consider \bils
and \fils with local search depth $\delta \ge w$ as this is sufficient to run into local optima from anywhere in the search space.

\begin{theorem}
	\label{the:number-of-evaluations-MA-general-w}
	The expected number of function evaluations of
	the (1+1)~MA on \hurdle with any hurdle width $2 \le w \le n/2$ is
	$\Theta(n^2 + n^3/w^2)$ for \bils and $\Theta(n^3/w^2)$ for \fils, both using $\delta \ge w$.	
\end{theorem}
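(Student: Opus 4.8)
The plan is to analyse the process at the level of the current local optimum. After an initial phase of $O(wn)$ function evaluations the search point is a local optimum with $z = kw$ zeros for some $1 \le k \le K = \Theta(n/w)$: by Chernoff bounds the random initial point has $\Theta(n)$ zeros, and a single \ls call drives it into the nearest local optimum, using at most $w$ passes (each of cost $n$) for \bils and at most two passes for \fils. By Lemma~\ref{hurdle-property}, and since any \ls call with $\delta \ge w$ returns a local optimum, the level $k$ is non-increasing thereafter, and the optimum is level $0$. Because \hurdle is a function of unitation, the distribution of the cost of a generation started at level $k$ depends only on $k$; hence the expected runtime is exactly $\sum_{k\ge1}(\text{expected number of generations spent at level }k)\cdot(\text{expected cost of a generation at level }k)$.

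The core step is the per-generation analysis at a fixed level $k$. Writing the offspring $y$ with $z(y) = kw-\Delta$, the quantity $\Delta$ (net number of zeros flipped to ones) is a signed difference of binomials, so $\Pr(\Delta = j) = \Theta((kw/n)^j)$ for constant $j \ge 1$, and since $kw \le n/2$ the resulting geometric-type series gives $\Pr(\Delta\ge2) = \Theta((kw/n)^2)$. By Lemma~\ref{hurdle-property} the segment between the local optima at $(k-1)w$ and $kw$ zeros is a fitness valley with bottom at $z = kw-1$, at which flipping a zero improves the fitness by $1/w$ (by $1+1/w$ if $w=2$) while flipping a one improves it by $1-1/w$. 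From this one reads off:
\begin{itemize}
\item \emph{\bils}: if $\Delta = 1$ then for $w\ge3$ \bils deterministically climbs back to a level-$k$ optimum (no progress); if $2\le\Delta\le w$ it descends monotonically, one zero per pass, to the level-$(k-1)$ optimum in $w-\Delta$ passes plus one confirmation pass; offspring with $z(y)\ge kw$ never make progress. So a generation at level $k$ makes progress with probability $\Theta((kw/n)^2)$ and has expected cost $O(n + wn(kw/n)^2)$ (but always $\ge n$), the second term coming from the $\Theta((kw/n)^2)$-likely long descents.
\item \emph{\fils}: from $z(y)\le kw-2$ it descends to the level-$(k-1)$ optimum within one pass (flipping only zeros); from $z(y) = kw-1$ it does so iff the first scanned position is a zero, probability $(kw-1)/n = \Theta(kw/n)$, otherwise returning to level $k$; in all cases it uses at most two passes. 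So a generation makes progress with probability $\Theta((kw/n)^2) + \Theta(kw/n)\cdot\Theta(kw/n) = \Theta((kw/n)^2)$ and costs $\Theta(n)$.
\end{itemize}

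The upper bounds follow by summing: the expected number of generations at level $k$ is at most $1/\Theta((kw/n)^2) = \Theta((n/(kw))^2)$, so $\sum_{k=1}^{K}\Theta((n/(kw))^2) = \Theta(n^2/w^2)$ generations in expectation (since $\sum_k k^{-2} = O(1)$); for \fils, multiplying by the $\Theta(n)$ cost gives $\Theta(n^3/w^2)$, while for \bils, $\sum_k\Theta((n/(kw))^2)\cdot O(n + wn(kw/n)^2) = O(n^3/w^2) + O(Kwn) = O(n^2 + n^3/w^2)$. For the lower bounds I would first show the process reaches level $1$ with probability $1-o(1)$ (overshooting means a single mutation jumping from a level $\ge2$ straight to level $0$, which flips $\Omega(w)$ designated zeros, probability $n^{-\Omega(w)}$ per generation, hence $o(1)$ over all $\Theta(n^2/w^2)$ generations), and then that escaping level $1$ needs $\Omega((n/w)^2)$ generations in expectation (progress probability $O((w/n)^2)$), each costing $\ge n$ evaluations, giving $\Omega(n^3/w^2)$ for both searches. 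For \bils one also needs $\Omega(n^2)$: the process performs $\Omega(n/w)$ progress generations at levels $\ge2$, and conditioned on progressing at level $k$ the offspring has $z(y) = kw-2$ with probability $\Omega(1)$, forcing \bils to descend $\Omega(w)$ steps at cost $\Omega(wn)$, so these (disjoint from the level-$1$) contributions already sum to $\Omega(n^2)$.

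The hard part will be the behaviour of the two local searches exactly at a valley bottom $z = kw-1$: this is where \bils and \fils diverge — \bils always climbs back out, \fils only with probability $\approx 1-kw/n$ — and it is what makes the progress probability $\Theta((kw/n)^2)$ (hence $\Theta(n^2/w^2)$ generations) rather than $\Theta(kw/n)$. Establishing these conditional probabilities and the $\Theta((kw/n)^\Delta)$ mutation estimates precisely, handling $w=2$ (where each valley has a single interior point, which is also its bottom, so \bils behaves differently) as a boundary case, and tracking the \bils generation cost finely enough to separate the $n^2$ and $n^3/w^2$ contributions are the technical crux; the rest is bookkeeping with harmonic/zeta-type sums and standard binomial tails.
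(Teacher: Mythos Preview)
Your proposal is correct and follows the paper's strategy closely for most of the argument: both establish that from a local optimum with $kw$ zeros the progress probability is $\Theta((kw/n)^2)$, sum over levels to get $O(n^2/w^2)$ expected generations, and then account for local-search cost. For the upper bound the paper uses an accounting split into \emph{improving} versus \emph{non-improving} local-search calls (each non-improving call costs $\Theta(n)$ in expectation over $O(n^2/w^2)$ generations; each improving \bils call costs at most $wn$ and there are at most $O(n/w)$ of them), whereas you compute the per-level expected generation cost $O(n+wn(kw/n)^2)$ and multiply via Wald; both routes produce the same two terms $O(n^3/w^2)+O(n^2)$.

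The one genuinely different piece is the $\Omega(n^2)$ lower bound for \bils. The paper proves a \emph{general} result (its Theorem~\ref{the:lower-bound-for-bils}): the (1+1)~MA with \bils needs $\Omega(n^2)$ evaluations on \emph{any} function with a unique optimum, by arguing that either $\Omega(n)$ generations or $\Omega(n)$ total \bils iterations are needed to close the initial $\Theta(n)$ Hamming distance (each \bils iteration and each mutated bit reduces the distance by at most~1). Your \hurdle-specific argument---$\Omega(n/w)$ successful descents, each costing $\Omega(wn)$ with constant probability---is sound in spirit but has a step you glossed over: you must show that $\Omega(n/w)$ distinct levels are actually visited, i.e.\ that level-skipping via $\Delta>w$ does not consume a constant fraction of the levels. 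This is provable (the conditional skip probability at level $k$ is $O((kw/n)^{w})$, so the expected drop per progress step is $1+o(1)$), but the paper's Hamming-distance argument sidesteps it entirely and yields a statement of independent interest.
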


In order to prove Theorem~\ref{the:number-of-evaluations-MA-general-w}, we first prove upper bounds of $\mathcal{O}(n^2 + n^3/w^2)$ and $\mathcal{O}(n^3/w^2)$, respectively, and then we prove lower bounds of $\Omega(n^2 + n^3/w^2)$ and $\Omega(n^3/w^2)$, respectively, showing that the upper bounds are asymptotically tight.

To prove the upper bounds, we first provide an upper bound on the expected number of generations needed. This does not include the function evaluations made during local search, which will be bounded separately.
\begin{theorem}
	\label{the:number-of-generations-MA-general-w}
	The expected number of generations of
	the (1+1)~MA using \bils or \fils with $\delta \ge w$ on \hurdle with any hurdle width $2 \le w \le n$ is
	$\mathcal{O}(n^2/w^2)$.	
\end{theorem}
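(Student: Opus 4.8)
The plan is to apply the classical fitness-level (fitness-based partition) method to the sequence of search points maintained by the (1+1)~MA, exploiting the structure of Lemma~\ref{hurdle-property}: the local optima of \hurdle are exactly the search points whose number of zeros is a multiple of $w$, and two consecutive local optima differ in fitness by exactly one. For $\ell\in\{0,1,\dots,\lfloor n/w\rfloor\}$ define level~$\ell$ to be the set of all $x$ with $\lceil z(x)/w\rceil=\ell$; these levels are totally ordered by fitness and level~$0=\{1^n\}$ is the optimum, so the method will produce an upper bound $\sum_{\ell\ge 1}1/p_\ell$ on the expected number of generations, where $p_\ell$ lower-bounds the probability that a single generation leaves level~$\ell$ upwards.

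First I would dispose of the initial transient. Since $\delta\ge w$, the call $z\leftarrow\ls(y)$ always returns a local optimum, whatever $y$ is; and whenever mutation flips no bit (probability $(1-1/n)^n\ge 1/(2e)$) the candidate $z=\ls(x)$ satisfies $f(z)\ge f(x)$, because local search performs only improving steps, so it is accepted. Hence after $O(1)$ generations in expectation the current search point is a local optimum; from then on it is always the output of a local search, i.e.\ has $\ell w$ zeros for some $\ell$, and since the (1+1)~MA never accepts a worse fitness, the level index is non-increasing thereafter. So it remains to bound the expected number of generations starting from an arbitrary local optimum.

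Next, fix a current local optimum $x$ with $z(x)=\ell w$ zeros, $\ell\ge 1$, and lower-bound $p_\ell$. It suffices that mutation flips exactly two of the $\ell w$ zero-bits and no other bit, an event of probability $\binom{\ell w}{2}(1/n)^2(1-1/n)^{n-2}\ge (\ell w)^2/(4en^2)$ (using $\ell w\ge 2$): I claim that from the resulting string, which has $\ell w-2$ zeros, both \bils and \fils are \emph{forced} to descend one zero-bit at a time to the local optimum with $(\ell-1)w$ zeros, which has strictly larger fitness and is therefore accepted. Unwinding the fitness formula shows that, as long as the number of zeros lies strictly between $(\ell-1)w$ and $\ell w$, the only improving single-bit flips turn a zero into a one (turning a one into a zero improves the fitness only when the number of zeros is $\equiv w-1\pmod w$, which never occurs along this range), each such flip improves the fitness by exactly $1/w$ except the last (which improves it by more), and $(\ell-1)w$ zeros is a local optimum; since $\delta\ge w$, local search has enough iterations both to complete the descent and to certify the local optimum. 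Hence $p_\ell\ge (\ell w)^2/(4en^2)$.

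Combining, the expected number of generations is at most $O(1)+\sum_{\ell=1}^{\lfloor n/w\rfloor}1/p_\ell\le O(1)+\frac{4en^2}{w^2}\sum_{\ell\ge 1}\ell^{-2}=O(n^2/w^2)$, which is the claim. The hard part will be the structural claim that local search, started from a string with $\ell w-2$ zeros, must walk all the way down to the next local optimum rather than drifting off: this requires tracking \bils's uniform tie-breaking among the equally-improving zero-flips, the fact that \fils may flip several bits within a single pass over its random permutation, and the degenerate small cases $w=2,3$ where the descent range is empty or a single point. These checks are elementary but need the fitness formula to be expanded case by case.
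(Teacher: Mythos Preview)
Your proposal is correct and follows essentially the same approach as the paper: both use the fitness-level method over the local optima at $z=\ell w$ zeros, lower-bound the escape probability by the chance that mutation flips exactly two zero-bits (yielding $\Omega((\ell w)^2/n^2)$), argue that local search then deterministically descends to the next local optimum, and sum $\sum_{\ell\ge 1}\ell^{-2}=O(1)$ to obtain $O(n^2/w^2)$. You are somewhat more careful than the paper about the initial transient and about verifying that \bils/\fils cannot drift upward from $\ell w-2$ zeros, but these are exactly the details the paper sweeps into ``$p(B\mid A)=1$'' without further comment.
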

\begin{proof}
	Assume a local optimum
    $x$ is the current search point with $z$
	zeros in the bitstring. The fitness landscape around $x$ is illustrated in
	Fig.~\ref{fig:hurdle-problem-ma-w-1}.	
	Lemma~\ref{hurdle-property} yields that
    the nearest search points with a better fitness than $f(x)$ are the local
	optima with exactly $z-w$ zeros.
	Let us consider the situation when the (1+1)~MA flips at least two zeros
	to move from $x$ to $x_2$ (say event A), and then any
	local search will locate the next local optimum $x_w$ by performing
	a sequence of one-step jumping:
	$x_2$ to $x_3$,\ldots, $x_{w-1}$ to $x_w$  (say event B).
		\begin{figure}
		\centering
		\includegraphics[scale=.060]{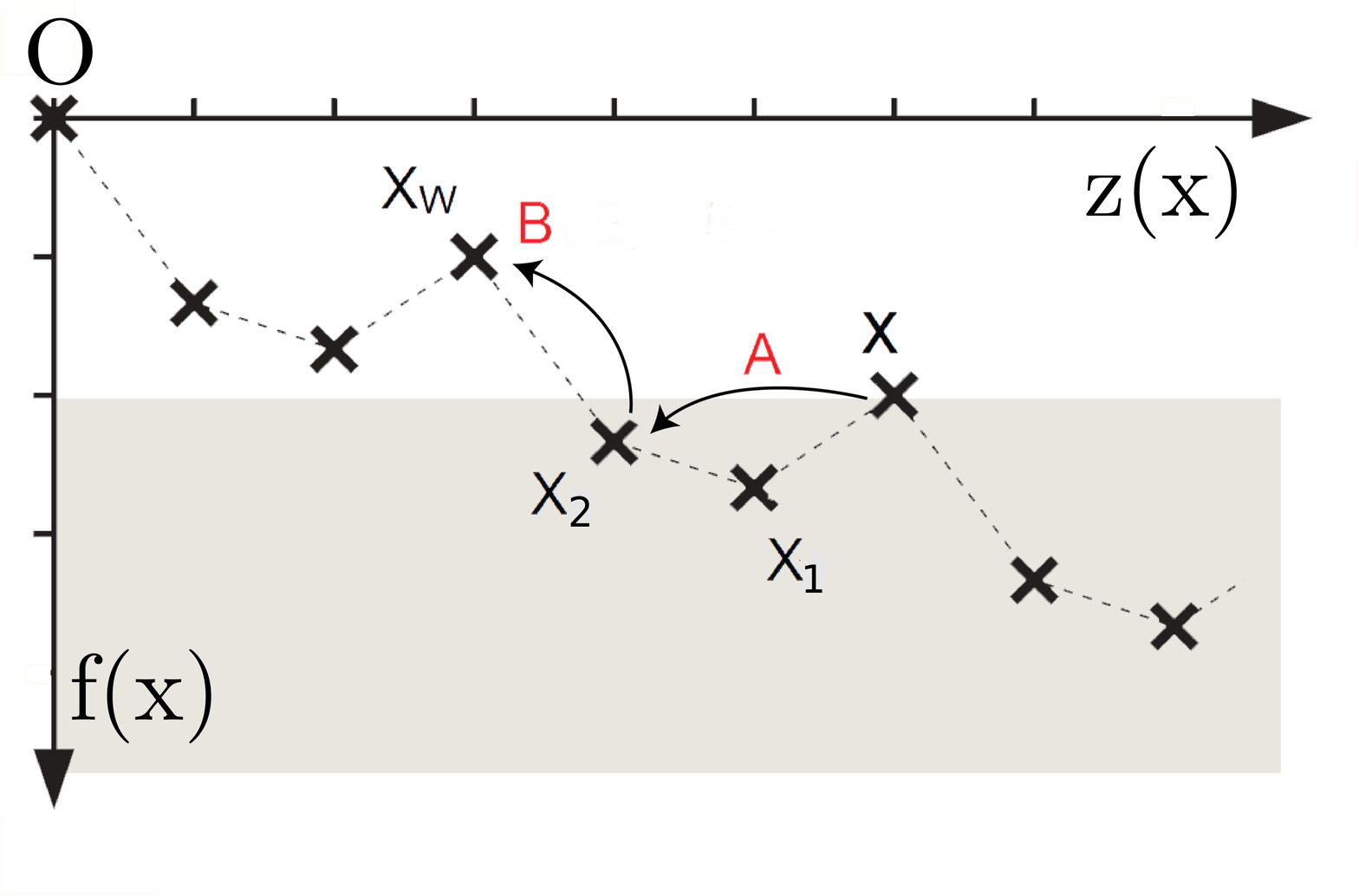}
		\caption{(1+1)~MA on \hurdle problems}
		\label{fig:hurdle-problem-ma-w-1}	
	\end{figure}
	Clearly, event A happens with probability
	$$ p(A)=\binom{z}{2}\left(\frac{1}{n}\right)^2\left(1-\frac{1}{n}\right)^{n-2}
	\geq \frac{z(z-1)}{2en(n-1)}.$$	
	Given event A, event B occurs with unity probability, i.\,e.\ ${p(B\mid A)=1}$,
	as the local search always locates the next local optimum.
	Hence, the probability of reaching $x_w$ from $x$
	is bounded from below by $p(A)$, and		
	the expected number of generations is at most $1/p(A)$.
	The expected number of generations until the global optimum is found is
	\begin{align}
		\label{ma-expected-generations}
		\mathbb{E}\left[T\right]
		&\leq \sum_{z\in \{w,2w,\ldots\}}\frac{2en(n-1)}{z(z-1)}.
	\end{align}
	We have, using $\sum_{i=1}^\infty \frac{1}{i^2} = \pi^2/6$,
	\begin{align*}
		\sum_{z\in \{w,2w,\ldots\}}\frac{1}{z(z-1)} &\leq \sum_{i=1}^{\infty} \frac{1}{iw(iw-1)}\\
		&\le \sum_{i=1}^{\infty} \frac{1}{iw(iw-iw/2)}
		= \frac{2}{w^2} \sum_{i=1}^{\infty} \frac{1}{i^2}
		= \frac{\pi^2}{3w^2}.
	\end{align*}	
	Substituting into (\ref{ma-expected-generations})
	yields $\mathbb{E}\left[T\right] = \mathcal{O}(n^2/w^2)$.
\end{proof}	 	

% \mynote{[Dirk:] I believe the above result can be strengthened to $\mathcal{O}(n^2/w^2)$ by using that %we only need to consider $z$ values that are multiples of~$w$:
%  \[
%     \sum_{i=1}^{\infty} \frac{1}{iw(iw-1)}
%    \le \sum_{i=1}^{\infty} \frac{1}{iw(iw-iw/2)}
%      = \frac{2}{w^2} \sum_{i=1}^{\infty} \frac{1}{i^2}
%       = \frac{\pi^2}{3w^2}.
%   \]
%  Then the expected number of function evaluations would be bounded by $\mathcal{O}(n^2 + n^3/w^2)$ for \bils, maybe $\mathcal{O}(n + n^3/w^2) = O(n^3/w^2)$ for \fils?
%  }

In order to bound the number of function evaluations made during local search, we distinguish between local searches that result in a strict improvement over the previous current search point, and those that don't. This is an example of the \emph{accounting method}~\cite[Chapter 17.2]{Cormen2009}, where function evaluations are charged to one of two accounts and the total costs are bounded separately for each account.
Adding the two bounds will yield an upper bound on the total number of function evaluations made during local search.

We first bound the number of function evaluations spent in any improving local search.
\begin{lemma}
	\label{lem:improving-local-searches}
	Call a local search \emph{improving} if it terminates with a search point that has a strictly better fitness than the current search point of the (1+1)~MA; otherwise, the local search is called \emph{non-improving}.
	The number of function evaluations spent in any improving local search call on \hurdle with hurdle width~$w$ and $\delta \ge w$ is at most~$wn$ for \bils and at most $2n$ for \fils.
\end{lemma}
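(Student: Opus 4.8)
The plan is to use the rigid ``one-dimensional'' shape of the \hurdle landscape together with the fact that each outer iteration of \bils and of \fils evaluates exactly $n$ neighbours, so that it suffices to bound the number of outer iterations by $w$ and by $2$, respectively. First I would record the local shape of the landscape: combining the fitness formula with the computation in the proof of Lemma~\ref{hurdle-property}, a search point with $z'$ zeros has fitness $-m$ if $z'=mw$ and fitness $-(m+1)-j/w$ if $z'=mw+j$ with $1\le j\le w-1$. Hence all fitness values are pairwise distinct, the local optima are exactly the search points whose number of zeros is divisible by $w$, and inside a ``valley'' the fitness strictly decreases as the number of zeros grows from $mw$ to $mw+(w-1)$ before jumping up at $(m+1)w$. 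From this I read off the strictly improving $1$-bit-flip moves at a non-optimal point with $z'$ zeros: flipping any $0$-bit (reaching $z'-1$ zeros) always improves, and in addition, if $z'\equiv w-1 \pmod w$ and $z'<n$, flipping any $1$-bit (reaching the local optimum with $z'+1$ zeros) improves; in this latter situation the $1$-bit flip is the fittest move when $w\ge 3$, whereas for $w=2$ a $0$-bit flip is fittest.

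For \bils I would bound the number of outer iterations by $w$; this is exactly where the hypothesis $\delta\ge w$ is needed. Write the input as having $z(y)=mw+j$ zeros with $0\le j\le w-1$. If $j=0$ then $y$ is already a local optimum and \bils returns after one outer iteration. Otherwise, by the move description above, each outer iteration replaces the current point by a fittest neighbour, whose number of zeros is either $z'-1$ (the generic case, and also the valley-bottom case when $w=2$) or, when $w\ge 3$ and the current point is a valley bottom with $z'<n$, equal to $(m+1)w$ and thus a local optimum. So the number of zeros either strictly decreases while staying $\ge mw$, or the algorithm jumps in a single step onto a local optimum; either way a local optimum is reached within at most $j\le w-1$ outer iterations, after which one further outer iteration detects that no neighbour improves and terminates. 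This yields at most $w$ outer iterations, hence at most $wn$ function evaluations, and in particular bounds any improving \bils call.

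For \fils the claim is that a single outer iteration --- one pass through a random permutation, costing $n$ evaluations --- already drives \fils onto a local optimum, so it stops after at most two outer iterations. If $y$ is a local optimum this is immediate; otherwise let $z(y)=mw+j$ with $1\le j\le w-1$ and consider the first bit the pass actually flips (which exists because every $0$-bit, and when $j=w-1$ and $z(y)<n$ also every $1$-bit, is a strict improvement at $y$). If that first flip is a $1\!\to\!0$ flip, it must have occurred at a valley bottom and it lands directly on a local optimum, after which nothing in the remainder of the pass improves. If instead it is a $0\!\to\!1$ flip, then from that point on the current point stays in $\{mw,\dots,mw+(w-1)\}$, strictly below where it started, and every further improving flip of the pass is again a $0\!\to\!1$ flip that decreases the number of zeros by one, while $1\!\to\!0$ flips no longer improve; since $y$ has $mw+j\ge j$ zeros and the pass visits every position exactly once, \fils performs enough such flips to bring the number of zeros down to $mw$, i.e.\ onto a local optimum, before the pass ends. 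A second pass then returns, so \fils uses at most $2n$ function evaluations, in particular for any improving call.

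I expect the delicate part to be the \fils analysis: one must argue that within one pass \fils is never ``starved'' of the $0$-bits it needs in order to descend to the nearest local optimum, and never ``overshoots'' past it into a neighbouring valley. Both follow from the two structural facts above --- every $0$-bit of a valley point is a strict improvement, so no needed flip is skipped merely because the permutation visits it early, and once the walk reaches a local optimum no single flip improves, so it cannot leave --- combined with the crude inequality $mw+j\ge j$. Some additional care is needed for the case split $w=2$ versus $w\ge 3$ at a valley bottom, and for the boundary case $z(y)=n$ in which no ``up'' move exists.
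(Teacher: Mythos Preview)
Your proposal is correct and follows essentially the same approach as the paper: bound the number of outer iterations of \bils by $w$ (at most $w-1$ steps to hit a local optimum, plus one to detect it) and of \fils by $2$ (one pass suffices to land on a local optimum, one more to detect it), then multiply by $n$ evaluations per iteration. The paper's proof is much terser---in particular, for \fils it simply asserts that one outer iteration reaches a local optimum---whereas you supply the careful case analysis (valley bottom versus interior, $w=2$ versus $w\ge 3$, the boundary $z(y)=n$, and the counting argument that a single pass visits enough $0$-bits); this extra detail is sound and fills in exactly what the paper leaves implicit.
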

\begin{proof}
	As \hurdle is a function of unitation, while no local optimum is found, local search will either decrease the number of ones in each iteration, or increase the number of ones in every iteration. In both cases a local optimum will be found after at most~$w-1$ iterations. Once a local optimum has been
    reached, at most $n$ further evaluations are needed before local search stops.
	
	As \bils makes $n$ function evaluations in every iteration, it makes at most $n(w-1)+n=wn$ function evaluations in total.
	
	For \fils, after one iteration of the outer for loop (see Algorithm~\ref{first-ascent})
	a local optimum will be found, and then $n$ further evaluations are needed before it stops.
	%Each iteration makes at most~$n$ function evaluations, and once a local optimum has reached, at most $n$ further evaluations are needed before local search stops. Together, at most~$wm$ evaluations are needed.
%	
\end{proof}

The expected number of function evaluations for non-improving local searches can be bounded as follows.
\begin{lemma}
	\label{lem:non-improving-local-searches}
	In the setting of Theorem~\ref{the:number-of-evaluations-MA-general-w}, the expected number of function evaluations spent by \bils and \fils during any non-improving local search is $\Theta(n)$.
\end{lemma}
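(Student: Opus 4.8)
The plan is to bound the number of function evaluations in a non-improving local search between $n$ (trivial lower bound) and $\mathcal{O}(n)$ in expectation (the real work), with absolute hidden constants. The lower bound $\Omega(n)$ is immediate: both \bils and \fils run their inner loop over all $n$ bit positions to completion before any termination test can fire, so each performs at least one full pass of exactly $n$ evaluations (here $\delta \ge w \ge 1$ guarantees at least one outer iteration); hence \emph{every} local search, improving or not, costs at least $n$ evaluations. The upper bound for \fils reuses the argument behind Lemma~\ref{lem:improving-local-searches}: since \hurdle is a function of unitation, a single pass through the random permutation drives \fils into a local optimum, and one further pass of $n$ evaluations confirms this and terminates the search, so \fils always makes at most $2n$ evaluations and a non-improving \fils call costs $\Theta(n)$.

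The substantial case is \bils. As throughout this section (with only a lower-order additive term absorbed elsewhere), assume the current search point $x$ of the (1+1)~MA is a local optimum with $z(x)=kw$ zeros and $f(x)=-k$, where $x\neq 1^n$; let $y$ be the mutated offspring and put $s := z(y)-kw$. The argument rests on three observations. (i)~Because \hurdle depends only on the number of zeros and is injective in that number, the sequence of \emph{numbers of zeros} visited by \bils is deterministic given $z(y)$ (the random choice from \curbestind only changes which specific bits flip, never the number of zeros), and within a valley \bils moves monotonically to an adjacent local optimum. (ii)~A local search is non-improving exactly when it terminates at a local optimum with at least $kw$ zeros; a short case analysis on $z(y)\bmod w$ — with the valley point having $mw-1$ zeros behaving differently for $w=2$ (where \bils descends) than for $w\ge 3$ (where \bils ascends by one block) — shows this happens precisely when $z(y)\ge kw$, and additionally when $z(y)=kw-1$ in the case $w\ge 3$. (iii)~If $z(y)=kw+s$ with $s\ge 1$, then \bils reaches an adjacent local optimum after at most $s$ improving iterations (each of $n$ evaluations) plus one confirming iteration, i.e.\ at most $(s+1)n$ evaluations; for $z(y)=kw$ it costs $n$ evaluations ($y$ is already a local optimum), and for $z(y)=kw-1$ (relevant only when $w\ge 3$) it costs $2n$ evaluations.

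Summing over all non-improving outcomes and using $\prob{z(y)=kw+s}\le\prob{\mathrm{Bin}(n,1/n)\ge s}\le 1/s!$ for every $s\ge 1$ (reaching $kw+s$ zeros requires at least $s$ of the $n-kw$ one-bits of $x$ to flip), we obtain
\[
\expect{(\text{number of evaluations})\cdot\mathbb{1}[\text{non-improving}]}
\;\le\; n + 2n + \sum_{s=1}^{\infty}\frac{(s+1)\,n}{s!}
\;=\; 3n + (2e-1)\,n
\;=\; \mathcal{O}(n),
\]
this bound being valid for all $w\ge 2$. Finally, flipping no bit — which has probability $(1-1/n)^n=\Omega(1)$ — produces $y=x$ and hence a non-improving local search, so $\prob{\text{non-improving}}=\Omega(1)$, and therefore the expectation \emph{conditioned} on the local search being non-improving is also $\mathcal{O}(n)$. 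Together with the lower bound this yields $\Theta(n)$ for both \bils and \fils.

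I expect the main obstacle to be observation~(iii): one must verify, across all residues of $z(y)$ modulo $w$ and both regimes $w=2$ and $w\ge 3$, that a non-improving \bils run from $kw+s$ zeros never costs more than $(s+1)n$ evaluations — it is exactly this linear-in-$s$ bound, set against the factorial tail $\prob{z(y)=kw+s}\le 1/s!$, that keeps the sum at $\mathcal{O}(n)$ rather than the naive $\mathcal{O}(wn)$ one would get from Lemma~\ref{lem:improving-local-searches} alone. Everything else is routine estimation.
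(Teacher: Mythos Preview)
Your proposal is correct and follows essentially the same route as the paper: bound the number of improving iterations of a non-improving local search by the mutation distance $|z(y)-z(x)|$, bound the probability of that distance being $s$ by $1/s!$, and sum to get $\mathcal{O}(1)$ expected iterations and hence $\mathcal{O}(n)$ evaluations. The paper treats \bils and \fils uniformly and also covers the (only first-generation relevant) case $\rem(z(x),w)>0$, whereas you dispatch \fils directly via the $2n$ bound from Lemma~\ref{lem:improving-local-searches} and do a finer case split on $w=2$ versus $w\ge 3$; you also make explicit the conditioning step ($\prob{\text{non-improving}}=\Omega(1)$), which the paper leaves implicit. These are presentational differences only.
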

\begin{proof}
	The lower bound $\Omega(n)$ is trivial as both local searches make at least $n$ function evaluations before stopping.
	
	Let $i$ denote the number of zeros in the current search point of the (1+1)~MA and $j$ denote the number of zeros in the search point after mutation, from which local search is called.
	
	If $\rem(i, w) = 0$, that is, $i$ is a local optimum, $j < i-1$ will lead to an improving local search
	(see Fig.~\ref{fig:hurdle-landscape}), and $j=i-1$ may either be improving or go back to a search point with $i$ ones in one iteration. If $j \ge i$ then local search will make at most $j-i$ iterations.
	
	If $\rem(i, w) > 0$, that is, $i$ is not a local optimum, $j < w \cdot \lceil i/w \rceil$ leads to an improving local search, whereas $j \ge w \cdot \lceil i/w \rceil$ will stop after at most $j-i$ iterations.
	
	In all these cases, local search is either improving, or it makes at most $|j-i|$ iterations. Note that a necessary condition of mutating a search point with $i$ ones into one with $j$ ones is that at least $|j-i|$ bits flip. The probability for this event is at most $\binom{n}{|j-i|} n^{-|j-i|} \le 1/(|j-i|!)$. The expected number of iterations in a non-improving local search is thus at most
	\[
	\sum_{j=0}^n |j-i| \cdot \frac{1}{|j-i|!}
	\le 2\sum_{d=1}^\infty d \cdot \frac{1}{d!}
	= 2\sum_{d=1}^\infty \frac{1}{(d-1)!}
	= 2\sum_{d=0}^\infty \frac{1}{d!} = 2e.
	\]
	The number of function evaluations made during a
    local search that stops after $s$ iterations
    is at most $(s+1)n$. Hence the expected number
    of function evaluations is at most $(2e+1)n$.
\end{proof}

Putting the previous results together, we are now prepared to prove the upper bounds claimed in Theorem~\ref{the:number-of-evaluations-MA-general-w}.
\begin{proof}[Proof of the upper bounds from Theorem~\ref{the:number-of-evaluations-MA-general-w}]
	By Theorem \ref{the:number-of-generations-MA-general-w} the expected number of generations is bounded by $\mathcal{O}(n^2/w^2)$. The expected number of function evaluations spent in any non-improving local search is $\mathcal{O}(n)$ according to Lemma~\ref{lem:non-improving-local-searches}. Together, the number of function evaluations in all non-improving local searches is at most $\mathcal{O}(n^3/w^2)$.
	
	By Lemma~\ref{lem:improving-local-searches}, the number of function evaluations in any improving local search is at most~$wn$ for BILS and at most $2n \le wn$ for FILS. Since every improving local search ends in a local optimum with a better fitness than the current search point of the (1+1)~MA, there can only be $\mathcal{O}(n/w)$ improving local searches as this is a bound on the number of fitness levels containing local optima. Hence the effort in all improving local searches is bounded by~$\mathcal{O}(n^2)$, and the overall number of function evaluations is bounded by $\mathcal{O}(n^2 + n^3/w^2)$.
\end{proof}

To prove the lower bounds from Theorem~\ref{the:number-of-evaluations-MA-general-w}, we first show a very general lower bound of $\Omega(n^2)$ for the (1+1)~MA with \bils. It holds for all functions with a unique global optimum and may be of independent interest.
\begin{theorem}
	\label{the:lower-bound-for-bils}
	The (1+1)~MA using \bils makes at least $\Omega(n^2)$ function evaluations, with probability $1-2^{-\Omega(n)}$ and in expectation, on any function with a unique global optimum.
\end{theorem}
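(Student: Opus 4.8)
The plan is to track, as a potential function, the Hamming distance $H_g := H(x_g, x^\ast)$ between the current search point $x_g$ after $g$ generations and the unique global optimum $x^\ast$, and to use the \emph{accounting method} again: every iteration of the outer loop of \bils scans the entire Hamming neighbourhood and therefore costs at least $n$ function evaluations, so it suffices to show that $\Omega(n)$ such outer-loop iterations are executed before $x^\ast$ is reached.

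The crucial structural observation is that a single outer-loop iteration of \bils changes the current search point by exactly one bit (or else \bils terminates). Hence, if in generation $g$ mutation flips $F_g$ bits and the subsequent \bils call performs $m_g$ moves, then by the triangle inequality $H(z_g, x^\ast) \ge H_{g-1} - F_g - m_g$, where $z_g$ is the \bils output, and since selection only ever replaces $x_{g-1}$ by $z_g$ we obtain $H_{g-1} - H_g \le F_g + m_g$. Telescoping this up to the first generation $\tau$ in which $x^\ast$ is produced gives $\sum_{g \le \tau}(F_g + m_g) \ge H_0$, while the number of function evaluations performed up to that point is at least $n \sum_{g \le \tau} m_g$ (and at least $\tau n$, since each \bils call performs at least one full neighbourhood scan).

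To finish, I would separate the two random ingredients. First, $H_0$ is binomially distributed with parameters $n$ and $1/2$, so $H_0 \ge n/4$ with probability $1 - 2^{-\Omega(n)}$ by a Chernoff bound. Second, to control $\sum_g F_g$ without reference to the random time $\tau$, I would fix a deterministic horizon $T := n/16$ and argue by cases: if $\tau > T$, the first $T$ generations already cost at least $Tn = \Omega(n^2)$ evaluations; if $\tau \le T$, then $\sum_{g \le \tau} F_g \le \sum_{g=1}^{T} F_g$, a sum of $nT$ independent $\Ber(1/n)$ variables with mean $T = n/16$, which is at most $n/8$ with probability $1 - 2^{-\Omega(n)}$ by a Chernoff bound. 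On the intersection of these two high-probability events, $\sum_{g \le \tau} m_g \ge H_0 - \sum_{g \le \tau} F_g \ge n/4 - n/8 = n/8$, so the algorithm performs at least $n \cdot n/8 = \Omega(n^2)$ function evaluations; a union bound keeps the total failure probability at $2^{-\Omega(n)}$, and the expectation bound follows immediately since the optimisation time is nonnegative and is $\Omega(n^2)$ with probability $1 - 2^{-\Omega(n)}$.

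I expect the main obstacle to be conceptual rather than technical: because the statement places no restriction on the local search depth, a single generation of the (1+1)~MA with \bils can in principle descend $\Theta(n)$ Hamming levels at once, so one cannot simply argue that $\Omega(n)$ \emph{generations} are required. The accounting trick circumvents this — whether the $\Omega(n)$ units of Hamming progress are spread over many cheap generations or compressed into a few long \bils runs, each such unit is paid for by a neighbourhood scan of cost $n$. The one technical point requiring care is that $\tau$ is a stopping time correlated with the mutation outcomes, which is why the concentration argument is applied at the fixed horizon $T$ rather than at $\tau$; a minor off-by-one bookkeeping issue in the final generation $\tau$ is harmless for the asymptotics.
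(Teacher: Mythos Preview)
Your proposal is correct and follows essentially the same approach as the paper: both arguments track the Hamming distance to the unique optimum, note that each \bils iteration reduces it by at most one at a cost of $n$ evaluations, bound the contribution of mutation bit-flips over a fixed $\Theta(n)$-generation horizon via Chernoff, and split into the two cases ``many generations'' versus ``many \bils iterations''. The paper packages this via events $A,B$ and their complements $E_1,E_2$, whereas you phrase it as an explicit telescoping sum plus a case distinction on $\tau \lessgtr T$; the constants differ but the logic is identical.
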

\begin{proof}
	It suffices to show the high-probability statement as the expectation is at least $(1-2^{-\Omega(n)}) \cdot \Omega(n^2) = \Omega(n^2)$.
	
	We show that with probability $1-2^{-\Omega(n)}$ one of the following events occurs.
	\begin{enumerate}
		\item[$A$:] The (1+1)~MA spends at least $n/12$ generations before finding the optimum.
		\item[$B$:] \bils makes a total of at least $n/6$ iterations before finding the optimum.
	\end{enumerate}
	Each event implies a lower bound of $\Omega(n^2)$ as each iteration of \bils makes $n$ function evaluations, and each generation leads to at least one iteration of \bils.
	
	In order for none of these events to occur, the (1+1)~MA must find the optimum within $n/12$ generations, using fewer than $n/6$ iterations of \bils in total. For this to happen, one of the following rare events must occur:
	\begin{enumerate}
		\item[$E_1$:] the (1+1)~MA is initialised with a search point that has a Hamming distance less than $n/3$ to the unique optimum or
		\item[$E_2$:] the initial search point has a Hamming distance of at least $n/3$ to the optimum, and the algorithm decreases this distance to~0 during the first $n/12$ generations, using fewer than $n/6$ iterations of \bils.
	\end{enumerate}
The reason is that, if none of the events $E_1$ and $E_2$ occur, then this implies $A \cup B$.
By contraposition, $\overline{A \cup B} \Rightarrow E_1 \cup E_2$ and $\Prob{\overline{A \cup B}} \le \Prob{E_1 \cup E_2} \le \Prob{E_1} + \Prob{E_2}$ by the union bound.
%It is easy to verify that $\overline{E_1} \cap \overline{E_2} \Rightarrow A \cup B$, thus by contraposition $\overline{A \cup B} \Rightarrow E_1 \cup E_2$ and $\Prob{\overline{A \cup B}} \le \Prob{E_1} + \Prob{E_2}$ by the union bound.

	Event $E_1$ has probability $\Prob{E_1} \le 2^{-\Omega(n)}$ by Chernoff bounds.
	
	For $E_2$, note that each iteration of local search can decrease the Hamming distance to the optimum by at most~1. Hence all iterations of \bils can only decrease the Hamming distance to the optimum by $n/6$ in total, and so the remaining distance of $n/3 - n/6 = n/6$ needs to be covered by mutations. Each flipping bit can decrease the distance to the optimum by at most~1. We have at most $n/12$ mutations, hence the expected number of flipping bits is at most $n/12$. The probability that at least $n/6$ bits flip during at most $n/12$ mutations is $2^{-\Omega(n)}$, which follows from applying Chernoff bounds to iid indicator variables $X_{i, t} \in \{0, 1\}$ that describe whether the $i$-th bit is flipped during generation~$t$ or not. Hence $\Prob{E_2} \le 2^{-\Omega(n)}$.
	
	Together, we have by the union bound,
	\[
	\Prob{\overline{A \cup B}} \le \Prob{E_1} + \Prob{E_2} \le 2^{-\Omega(n)} + 2^{-\Omega(n)} \le 2^{-\Omega(n)}.
	\]
	%    We conclude that, if neither condition (a) nor (b) has happened, this implies that one of two events of probability $2^{-\Omega(n)}$ must have occurred, hence the probability of (a) or (b) happening is at least $1- 2^{-\Omega(n)} - 2^{-\Omega(n)} = 1 - 2^{-\Omega(n)}$ by the union bound.
	This completes the proof.
\end{proof}

%    \begin{theorem}
%    \label{the:lower-bound-MA-general-w}
%	The expected number of function evaluations of
%	the (1+1)~MA on the \hurdle problems with $2 \le w \le n/2$ is bounded from
%    below by $\Omega(n^2 + n^3/w^2)$ for \bils and $\Omega(n^3/w^2)$ for \fils.	
%    \end{theorem}
\begin{proof}[Proof of the lower bounds from Theorem~\ref{the:number-of-evaluations-MA-general-w}]
	A bound of $\Omega(n^2)$ for the (1+1)~MA with \bils follows from Theorem~\ref{the:lower-bound-for-bils}.
	
	We prove lower bounds $\Omega(n^3/w^2)$ for both local searches by considering the remaining time when the (1+1)~MA has reached a local optimum with $w$ zeros. Theorem~\ref{the:lower-bound-EA} reveals that the (1+1)~EA reaches such a local optimum with probability~$\Omega(1)$, and it is obvious that the same statement also holds for the (1+1)~MA. Then a lower bound of $\Omega(n^3/w^2)$ follows from showing that the expected number of function evaluations starting with a local optimum having $w$ zeros is $\Omega(n^3/w^2)$.
	
	From such a local optimum, the (1+1)~MA with \bils has to flip at least two zeros in one mutation. Otherwise, the offspring will have at least $w-1$ zeros, and \bils will run back into a local optimum with $w$ zeros (or a worse local optimum). The probability for such a mutation is at most $\binom{w}{2} \cdot 1/n^2 = \mathcal{O}(w^2/n^2)$, and the expected number of generations until such a mutation happens is at least $\Omega(n^2/w^2)$.
	
	The same statement holds for the (1+1)~MA with \fils: here it is necessary to either flip at least two zeros as above, or to create a search point with $w-1$ zeros and to have \fils find a search point with $w-2$ zeros as the first improvement. In the latter case, \fils will find the global optimum. The probability of creating a search point with $w-1$ zeros is at most $w/n$ as it is necessary to flip one of $w$ zeros. In this case \fils creates a search point with $w-2$ ones as first improvement if and only if the first bit to be flipped is a zero. Since there are $w-1$ zeros, and each bit has the same probability of $1/n$ of being the first bit flipped, the probability of the first improvement decreasing the number of zeros is $(w-1)/n$. Together, the probability of a generation creating the global optimum is still $\mathcal{O}(w^2/n^2)$, and the expected number of generations is still at least $\Omega(n^2/w^2)$.
	
	In every generation, both \bils and \fils make at least $n$ evaluations. Hence we obtain $\Omega(n^3/w^2)$ as a lower bound on the number of function evaluations.
\end{proof}

\section{Conclusions and Future Work}
\label{conclusions}

We have provided a rigorous runtime analysis, comparing the
simple (1+1)~EA with the (1+1)~MA using two local
search algorithms, \fils and \bils, on the class of \hurdle problems.
Our main results are tight bounds of $\Theta(n^2+ n^3/w^2)$ on the expected number of function evaluations of the (1+1)~MA using \bils and
$\Theta(n^3/w^2)$ for the (1+1)~MA using \fils. On the other hand,
the (1+1)~EA and local search algorithms on their own take time $\Theta(n^w)$ and
$2^{\Omega(n)}$, respectively. For $w = \omega(1)$ the latter times are superpolynomial, whereas the expected number of function evaluations for the (1+1)~MA is always polynomial, regardless of the hurdle width~$w$.

The \hurdle problem hence represents an illustrative problem where a hybrid algorithm drastically outperforms both of the individual search algorithms it contains, when these are run on their own.

A surprising conclusion is also that the \hurdle problem class becomes easier for the (1+1)~MA as the hurdle width~$w$ grows. The reason is that while the (1+1)~EA has to jump to the global optimum by mutation, for the (1+1)~MA it suffices to jump into the basin of attraction of the global optimum. Increasing the hurdle width~$w$ makes it harder for the (1+1)~EA to make this jump, but it also increases the size of the basin of attraction of the global optimum, effectively giving the (1+1)~MA a bigger target to jump to.

More specifically, our analysis has shown that the (1+1)~MA can efficiently reach a better local optimum by flipping two 0-bits during mutation, as then the resulting mutant is located in the basin of attraction of a better local optimum. The expected optimisation time is dominated by the time spent in the last local optimum, that is, when the current search point contains $w$ zeros. From here, a mutation flipping two 0-bits has probability $\Theta(w^2/n^2)$, where the factor of $w^2$ results from $\binom{w}{2}$ choices for the two flipping 0-bits. The larger the hurdle width, the larger the probability of making such a mutation, and the lower the term of order $n^3/w^2$ becomes. Note that the (1+1)~MA is otherwise agnostic to the width of the fitness valley, as local search will efficiently locate a better local optimum, regardless of the distance between local optima. This is in sharp contrast to the (1+1)~EA, which has to flip exactly $w$ bits to jump to the optimum, leading to an expected time of $\Theta(n^w)$.

Amongst problems with a ``big valley'' structure, \hurdle has a  favourable landscape for the (1+1)~MA as local optima have a very small Hamming distance to search points in the basin of attraction of better local optima. This makes it easy to transition from one local optimum to another by mutation and local search. A promising avenue for future work would be to analyse the performance of the (1+1)~MA for other classes of problems with big valley structures where larger jumps need to be made to transition to better local optima. This may require increasing the mutation rate, as commonly done in iterated local search algorithms~\cite{Lourencco2002}.

Another avenue for future work could be to rigorously analyse the expected running time of genetic algorithms with crossover on the \hurdle problem class, to investigate how their performance compares against that of the (1+1)~MA. Experimental results in~\cite{PRUGELBENNETT2004135} suggest that crossover provides a substantial advantage over the (1+1)~EA, however no rigorous analysis has been done.
%The advantage of crossover was rigorously proven for function classes with similar characteristics, including \textsc{OneMax}~\cite{Sudholt2016,Corus2017a} and \textsc{Jump}$_k$~\cite{Jansen2002,Koetzing2011a,Dang2017}.

\end{document}